\newlength{\tempheight}
\newlength{\tempwidth}
\newcommand{\rowname}[1]
{\rotatebox{90}{\makebox[\tempheight][c]{\textbf{#1}}}}
\newcommand{\columnname}[1]
{\makebox[\tempwidth][c]{\textbf{#1}}}
\NewDocumentCommand{\codeword}{v}{%
\texttt{\textcolor{blue}{#1}}%
}
\newtheorem{theorem}{Theorem}
\newtheorem{lemma}{Lemma}
\newtheorem{definition}{Definition}
\newcommand{\cmark}{\ding{51}}%
\newcommand{\xmark}{\ding{55}}%
\ttfamily\color{black},
  \providecommand\BibTeX{{%
    \normalfont B\kern-0.5em{\scshape i\kern-0.25em b}\kern-0.8em\TeX}}}
\begin{document}

\title{Differentially Private Ensemble Classifiers for Data Streams}

\author{Lovedeep Gondara}
\affiliation{%
\department{School of Computing Science}
  \institution{Simon Fraser University}
  \country{British Columbia, Canada}
}
\email{lgondara@sfu.ca}

\author{Ke Wang}
\affiliation{%
\department{School of Computing Science}
  \institution{Simon Fraser University}
  \country{British Columbia, Canada}
}
\email{wangk@cs.sfu.ca}

\author{Ricardo Silva Carvalho}
\affiliation{%
\department{School of Computing Science}
  \institution{Simon Fraser University}
  \country{British Columbia, Canada}
}
\email{rsilvaca@sfu.ca}

\renewcommand{\shortauthors}{Gondara, et al.}

\begin{abstract}
Learning from continuous data streams via classification/regression is prevalent in many domains. Adapting to evolving data characteristics (concept drift) while protecting data owners' private information is an open challenge. We present a differentially private ensemble solution to this problem with two distinguishing features: it allows an \textit{unbounded} number of ensemble updates to deal with the potentially never-ending data 
streams under a fixed privacy budget, and it is \textit{model agnostic}, in that it treats any pre-trained differentially private classification/regression model as a black-box. Our method outperforms competitors on real-world and simulated datasets for 
varying settings of privacy, concept drift, and data distribution. 
\end{abstract}

\begin{CCSXML}
<ccs2012>
<concept>
<concept_id>10002978.10002991.10002995</concept_id>
<concept_desc>Security and privacy~Privacy-preserving protocols</concept_desc>
<concept_significance>500</concept_significance>
</concept>
<concept>
<concept_id>10002951.10003227.10003351.10003446</concept_id>
<concept_desc>Information systems~Data stream mining</concept_desc>
<concept_significance>100</concept_significance>
</concept>
</ccs2012>
\end{CCSXML}

\ccsdesc[500]{Security and privacy~Privacy-preserving protocols}
\ccsdesc[100]{Information systems~Data stream mining}

\keywords{Differential privacy; data streams; ensembles; concept drift }


\maketitle

\section{Introduction} \label{sec:intro}
Continuous data streams generate large volumes of data, with examples being data from wearables \cite{de2016iot}, biosensors in medicine \cite{ebada2020applying}, social media \cite{lukes2018sentiment}, news \cite{ksieniewicz2020fake}, mobile applications \cite{Allix:2016:ACM:2901739.2903508}, electronic health records \cite{cihidad},
credit card transactional flows \cite{dal2015credit}, malware data \cite{Allix:2016:ACM:2901739.2903508}. To assist in decision making, machine learning models need to handle data streams efficiently. Scalability is not the only challenge; we have to consider that properties and patterns of data are subject to change over time, a phenomenon known as \emph{concept drift}. For example,  malware files and fake news evolve over time to evade detection \cite{Allix:2016:ACM:2901739.2903508,ksieniewicz2020fake}.

To further add to the challenge, data streams from many domains involve sensitive, personal information about contributing users, such as patients' records and user data in mobile applications, protection of which is of paramount interest. While concept drift and privacy have been extensively studied in isolation, works considering both are in infancy. See more discussion in Section \ref{sec:related}. 
In this work, our goal is to allow machine learning models to deal with concept drift when training on potentially never-ending data streams involving sensitive data, where the model(s) learned can be published without disclosing sensitive information. 
To that end, we consider Differential Privacy (DP) \cite{Dwork:2006:CNS:2180286.2180305,abadi2016deep,shokri2015privacy} as the privacy definition, and
widely used ensemble learning for data streams  \cite{cano2020kappa} as the modelling paradigm.

\subsection{Challenges}\label{sec:challenge}
Enforcing privacy on ensembles handling concept drift is not trivial. The main approaches of ensembles over data streams \cite{cano2020kappa}, such as weight modification \cite{kolter2007dynamic,cano2020kappa} and online ensemble update \cite{pietruczuk2017adjust,cano2020kappa} are not ideal for the privacy-preserving scenario where with new incoming instances, the former continuously measures the performance of a diverse set of classifiers to update the weights and the latter continuously updates the pool of online models. This \emph{continuous} update could lead to privacy budget depletion due to composition of privacy loss, limiting the number of updates before the privacy budget runs out. Our goal is to deal with the never-ending data streams by allowing for an \emph{unbounded} number of updates under a \textit{fixed} privacy budget.

\subsection{Our Proposals}
We propose a DP temporal ensemble approach in the form of \emph{dynamic ensemble line-up} \cite{wang2003mining, cano2020kappa}. In the non-private setting, the data stream comes in the form of labeled data chunks $D_1,\cdots, D_t$ until the current time $t$, and at the time $t$ the ensemble of size $k$, denoted $\mathcal{E}_t$, consists of $k$ models $M_{\tau(1)},\cdots,M_{\tau(k)}$ trained on the $k$ corresponding data chunks $D_{\tau(1)},\cdots,D_{\tau(k)}$.
$\tau(i)$ maps the (relative) position $i$ within $\mathcal{E}_t$ to the corresponding (absolute) time point 
in the data stream. 
With a weighting scheme, these $k$ models collectively make the prediction for unlabeled data at the next time $t+1$. As the labeled data chunk $D_{t+1}$ becomes available, a new model is trained on $D_{t+1}$ and 
the next ensemble $\mathcal{E}_{t+1}$ is obtained by replacing either the oldest model or the weakest model in  $\mathcal{E}_{t}$ with the new model. By replacing (instead of updating) some existing model, this approach is particularly suited for limiting the accumulation of privacy loss.  Our contributions are specifically described as follows. 

\begin{enumerate}

\item (Section \ref{sec:prob_statement}) We formulate the problem of DP temporal ensembles for a release history with the formal definition provided in Section \ref{sec:dp_ens}.

\item (Section \ref{sec:dp_ens_weight}) At the core of our DP ensemble mechanism is the DP weighting scheme for aggregating the prediction of component models. We present the DP weighting scheme for classification and regression. Our method is \textit{model agnostic}, that is, it treats DP models $M_i$ as black-boxes. 

\item  (Section \ref{sec:dp_ens}) We present a DP ensemble mechanism for releasing the ensemble $\mathcal{E}_t=\{M_{\tau(1)},\cdots,M_{\tau(k)}\}$ at any time $t$, to ensure that the DP guarantee holds even if the adversary has access to all released ensembles $\mathcal{E}_{t'}$ for $t'\leq t$. Our proposal
allows an \emph{unlimited} number of ensemble updates for never ending data streams at a constant privacy budget. 

\item (Section \ref{sec:DP-transfer}) 
To demonstrate the benefits of our method for deep neural network models where the potentially large number of model parameters present a challenge for retaining utility under DP guarantee, we consider a transfer learning option 
for boosting utility where a public dataset is available.


\item  (Section \ref{sec:DP-transfer}) We provide empirical evidence on the effectiveness of the proposed DP temporal ensemble using real-world and simulated datasets. The source code and datasets will be made publicly available for reproducibility\footnote{\url{https://github.com/lgondara/DPTemporalEnsemble}}.
\end{enumerate}


\section{Related Work}\label{sec:related}
Non-private temporal ensembles have been studied before \cite{wang2003mining,scholz2007boosting,kolter2007dynamic,cano2020kappa}. See \cite{gama2014survey, lu2018cdreview, cano2020kappa} for a review. These methods can be classified into either explicit or implicit. \textit{Explicit} methods use drift detection and only update the model when drift is detected. Examples are \cite{de2018ddmfishers} and \cite{yu2017ddmhierarchical}. \textit{Implicit} methods do not detect drift but adapt the model to account for changes automatically. The updates can be incremental in a single classifier \cite{elwell2011incremental}, or weighted in an ensemble \cite{wang2003mining}. Ensemble models in the implicit setting usually outperform other approaches \cite{lu2018cdreview, wang2003mining}. Our work 
is an implicit method and adapts dynamic ensemble line-up \cite{cano2020kappa} but deals with sensitive data.

For privacy preserving works on data streams, \cite{zhang2019bayesdp} proposes DP Bayesian classifiers with explicit drift detection. This method continuously updates the model parameters using incoming sensitive data which reduces the privacy budget by some amount after each update, thus, the privacy budget will run out after a finite number of updates. Also, the method does not account for the privacy loss for drift detection and the privacy loss of updates when no concept drift is detected. The works in \cite{fang2020localdp,khavkin2018clusterdp,fanaeepour2019privstream,dwork2010pan,kellaris2014differentially} focused on releasing summary statistics such as counts, mean, mode, range queries, centroids, etc. 


There are previous works on \emph{static} differentially private ensembles such as \cite{jagannathan2009practical,xiang2018collaborative}. These works are not designed to handle streaming data because they cannot accommodate concept drift.


\section{Differential Privacy}\label{sec:prelim}

\begin{definition}[Neighbors and Sensitivity]\label{def:sensitivity}
Two data sets $D$ and $D'$ are \textit{neighboring} if they differ due to the substitution of exactly one sample. The \textit{sensitivity} of a function $f: \mathbb{D}^n \rightarrow \mathbb{R}^d$, denoted by $\Delta f$, is $max_{D,D'} ||f(D)-f(D')||$ over all neighboring pairs $D$ and $D'$. 
\end{definition}

\begin{definition}[Differential Privacy \cite{dwork2006our}]\label{def:dp}
A randomized mechanism $\mathcal{M}: \mathbb{D}^n \rightarrow \mathbb{R}^d$ is $(\varepsilon,\delta)$-differentially private if for any pair of neighbouring data sets $D,D' \in \mathbb{D}^n$, and for all sets $S$ of possible outputs:
\begin{equation}
    \text{Pr}[\mathcal{M}(D) \in S] \le e^{\varepsilon} \text{Pr}[\mathcal{M}(D') \in S] + \delta 
\end{equation}
\end{definition}

Since neighboring datasets differ by the data of one user, the inequality above ensures that the output of a mechanism $\mathcal{M}$ satisfying DP will have a \emph{small} impact, through the multiplicative factor  $e^{\varepsilon}$ and the additive factor $\delta$,
if we remove or add any single user from the dataset used to generate the output. 

\begin{theorem}[Parallel Composition \cite{frank2009}]\label{thm:pc}
Let $\mathcal{M}_i$ each provide $(\varepsilon_i,\delta_i)$-differential privacy.
Let $D_i$ be arbitrary disjoint subsets of $D$.
The sequence of $\mathcal{M}_i(D_i)$ provides $(max_i \varepsilon_i, max_i \delta_i)$-differential privacy.
\end{theorem}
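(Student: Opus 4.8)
The plan is to reduce the joint release to the single component mechanism whose input actually changes. First I would fix an arbitrary pair of neighboring datasets $D,D'$, differing by the substitution of one record. Since the $D_i$ are pairwise disjoint, that record lies in at most one of them, say $D_j$ (and if it lies in none of them, the whole output tuple is identically distributed under $D$ and $D'$, so the claim is immediate). Consequently $D_i = D_i'$ for every $i\neq j$, while $D_j$ and $D_j'$ are themselves neighboring, differing only by the single substitution.

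Next I would use the (standard, implicit) assumption that the $\mathcal{M}_i$ draw independent randomness, so that the output tuple $(\mathcal{M}_i(D_i))_i$ has a product distribution. Fixing a measurable event $S$ on output tuples and conditioning on the realized coordinates $\vec{o}_{-j}$ of all mechanisms other than the $j$-th, with slice $S_{\vec{o}_{-j}} = \{\, o : (\vec{o}_{-j},o)\in S \,\}$, one obtains
\begin{equation}
\Pr[(\mathcal{M}_i(D_i))_i \in S] \;=\; \int \Pr[\mathcal{M}_j(D_j)\in S_{\vec{o}_{-j}}]\; d\mu(\vec{o}_{-j}),
\end{equation}
where $\mu$ is the joint law of $(\mathcal{M}_i(D_i))_{i\neq j}$; crucially, $\mu$ also equals the joint law of $(\mathcal{M}_i(D_i'))_{i\neq j}$, precisely because $D_i = D_i'$ off the index $j$.

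Then I would apply $(\varepsilon_j,\delta_j)$-differential privacy of $\mathcal{M}_j$ to the neighboring pair $D_j,D_j'$ inside the integrand, giving $\Pr[\mathcal{M}_j(D_j)\in S_{\vec{o}_{-j}}] \le e^{\varepsilon_j}\Pr[\mathcal{M}_j(D_j')\in S_{\vec{o}_{-j}}] + \delta_j$ pointwise in $\vec{o}_{-j}$, and integrate this back against $\mu$; since $\mu$ is a probability measure the additive term integrates to exactly $\delta_j$, yielding $\Pr[(\mathcal{M}_i(D_i))_i\in S] \le e^{\varepsilon_j}\Pr[(\mathcal{M}_i(D_i'))_i\in S] + \delta_j$. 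Finally, bounding $\varepsilon_j\le\max_i\varepsilon_i$ and $\delta_j\le\max_i\delta_i$ --- uniformly over all neighboring pairs, even though the ``active'' index $j$ itself depends on the pair --- gives the asserted $(\max_i\varepsilon_i,\max_i\delta_i)$-differential privacy.

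I expect the only genuine subtlety to be the measure-theoretic bookkeeping in the conditioning step: disintegrating the product measure and checking measurability of the slices $S_{\vec{o}_{-j}}$. Over discrete output domains this degenerates to an elementary sum and the argument collapses to a couple of lines, so the obstacle is cosmetic rather than conceptual. The one point worth stating explicitly is the ``at most one active index'' observation, which is exactly where disjointness of the $D_i$ is used --- without it a single record could influence several of the $\mathcal{M}_i$ simultaneously, and the guarantee would degrade from a maximum to a sum of the $\varepsilon_i$ (and $\delta_i$).
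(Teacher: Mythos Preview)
Your proof is correct and is the standard argument for parallel composition. Note, however, that the paper does not actually prove this theorem: it is stated in the preliminaries as a cited result from \cite{frank2009}, with no accompanying proof. There is therefore nothing in the paper to compare your argument against; your write-up would serve as a self-contained justification of a fact the paper simply imports.
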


\begin{theorem}
[Sequential Composition \cite{frank2009}\cite{dwork2014book}]\label{thm:sc}
Let $\mathcal{M}_i$ be $(\varepsilon_i,\delta_i)$-differentially private.
The adaptive sequence of $\mathcal{M}_i$ is $(\sum_i \varepsilon_i, \\ \sum_i \delta_i)$-differentially private. 
\end{theorem}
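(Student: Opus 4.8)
The plan is to argue by induction on the number $k$ of composed mechanisms, so that the real content lies in the two-mechanism case: if $\mathcal{M}_1$ is $(\varepsilon_1,\delta_1)$-DP and, for every fixed first output $y_1$, the mechanism $\mathcal{M}_2(\cdot,y_1)$ is $(\varepsilon_2,\delta_2)$-DP, then the adaptive composition $\mathcal{M}(D)=\big(\mathcal{M}_1(D),\,\mathcal{M}_2(D,\mathcal{M}_1(D))\big)$ is $(\varepsilon_1+\varepsilon_2,\delta_1+\delta_2)$-DP. The inductive step is then bookkeeping: treat $(\mathcal{M}_1,\dots,\mathcal{M}_{k-1})$ as a single mechanism which, by the induction hypothesis, is $(\sum_{i<k}\varepsilon_i,\sum_{i<k}\delta_i)$-DP, and whose output is fed adaptively into $\mathcal{M}_k$; one further application of the two-mechanism statement closes the induction.

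For the two-mechanism case I would fix a neighboring pair $D,D'$ and a measurable set $S$ of output pairs, and for each $y_1$ set $S_{y_1}=\{y_2 : (y_1,y_2)\in S\}$. Conditioning on the first output gives $\Pr[\mathcal{M}(D)\in S]=\mathbb{E}_{y_1\sim\mathcal{M}_1(D)}\big[\Pr[\mathcal{M}_2(D,y_1)\in S_{y_1}]\big]$. Apply $(\varepsilon_2,\delta_2)$-DP of $\mathcal{M}_2(\cdot,y_1)$ inside the expectation to swap $D$ for $D'$ in the inner probability (costing a factor $e^{\varepsilon_2}$ and an additive $\delta_2$), then observe that $g(y_1):=\Pr[\mathcal{M}_2(D',y_1)\in S_{y_1}]$ is a $[0,1]$-valued function of $y_1$ and invoke the standard strengthening of Definition~\ref{def:dp} from events to bounded functions: $(\varepsilon_1,\delta_1)$-DP of $\mathcal{M}_1$ implies $\mathbb{E}[g(\mathcal{M}_1(D))]\le e^{\varepsilon_1}\mathbb{E}[g(\mathcal{M}_1(D'))]+\delta_1$ for every such $g$ (proved by the layer-cake identity $\mathbb{E}[g(X)]=\int_0^1\Pr[g(X)\ge t]\,dt$ and applying the set-based DP inequality to each superlevel set $\{g\ge t\}$). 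Chaining the two steps yields the multiplicative factor $e^{\varepsilon_1+\varepsilon_2}$ against $\Pr[\mathcal{M}(D')\in S]$.

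The step I expect to be the obstacle is collapsing the additive terms to exactly $\delta_1+\delta_2$: the naive chaining above leaves $e^{\varepsilon_2}\delta_1+\delta_2$ (or $\delta_1+e^{\varepsilon_1}\delta_2$ if the two inequalities are applied in the other order), and no reordering removes the stray exponential factor. To reach the clean bound I would replace the last chaining step by the standard decomposition/coupling characterization of approximate DP used in the usual proof of basic composition --- informally, for neighbors $D,D'$ the output law of an $(\varepsilon,\delta)$-DP mechanism is within total-variation distance $\delta$ of a law satisfying the pure-$\varepsilon$ likelihood-ratio bound against the other --- applied once to $\mathcal{M}_1$ and once to each $\mathcal{M}_2(\cdot,y_1)$, so that the two ``failure'' events contribute at most $\delta_1$ and $\delta_2$ additively by a union bound while on the complementary event the pure-$\varepsilon_1$ and pure-$\varepsilon_2$ factors simply multiply. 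In the pure-DP special case $\delta_1=\delta_2=0$ everything reduces to the one-line density computation $p_D(y_1,y_2)=p^{(1)}_D(y_1)\,p^{(2)}_{D,y_1}(y_2)\le e^{\varepsilon_1}p^{(1)}_{D'}(y_1)\cdot e^{\varepsilon_2}p^{(2)}_{D',y_1}(y_2)$ integrated over $S$, which also makes transparent why the $\varepsilon_i$ add. Together with Parallel Composition (Theorem~\ref{thm:pc}), this is the accounting tool we will invoke when bounding the cumulative privacy loss across the released ensembles.
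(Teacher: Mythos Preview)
The paper does not prove this theorem: it appears in Section~\ref{sec:prelim} as a cited preliminary result, with no accompanying argument. So there is no in-paper proof to compare against.

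On its own merits, your sketch is correct, and you have put your finger on the one genuine difficulty: the naive two-mechanism chaining produces an additive term of the form $e^{\varepsilon_2}\delta_1+\delta_2$ (or its symmetric variant), not $\delta_1+\delta_2$. Your proposed fix---passing through the decomposition/coupling characterization of $(\varepsilon,\delta)$-DP so that the two ``bad'' events union-bound to $\delta_1+\delta_2$ while on the complementary event the pure-$\varepsilon$ likelihood-ratio bounds simply multiply---is the standard route to the clean constant, and is essentially how the basic composition theorem is established in the references the paper cites. The layer-cake step extending the event-based DP inequality to $[0,1]$-valued test functions, and the pure-DP density calculation, are both accurate. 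Your closing remark about downstream use is also apt: in the proof of Theorem~\ref{thm:dp-recent}, Case~(II) applies the Laplace mechanism $k$ times to the same validation set $V_t$, and the resulting factor $e^{k\varepsilon_2}$ is precisely an instance of sequential composition.
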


\begin{theorem}[Post Processing \cite{dwork2014book}] 
\label{thm:post}
Let $\mathcal{M}: \mathbb{D}^n \rightarrow \mathbb{R}^d$ be a randomized mechanism that is ($\varepsilon, \delta$)-differentially private. Let $f:\mathbb{R}^d \rightarrow \mathbb{R}^t$ be a deterministic function. Then $f \circ \mathcal{M}: D^n \rightarrow \mathbb{R}^t$ is ($\varepsilon, \delta$)-differentially private.
\end{theorem}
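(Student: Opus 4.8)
The plan is to reduce any event concerning the output of $f \circ \mathcal{M}$ to an event concerning the output of $\mathcal{M}$ itself, and then simply invoke the DP guarantee already assumed for $\mathcal{M}$. Concretely, I would fix an arbitrary pair of neighboring datasets $D, D' \in \mathbb{D}^n$ and an arbitrary (measurable) set $S \subseteq \mathbb{R}^t$ of outputs. The first step is to observe that, because $f$ is deterministic, the event $\{ f(\mathcal{M}(D)) \in S \}$ coincides exactly with the event $\{ \mathcal{M}(D) \in T \}$, where $T = f^{-1}(S) = \{ y \in \mathbb{R}^d : f(y) \in S \}$ is the preimage of $S$ under $f$. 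Hence $\Pr[(f \circ \mathcal{M})(D) \in S] = \Pr[\mathcal{M}(D) \in T]$, and likewise with $D'$ in place of $D$.

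The second step is to apply Definition~\ref{def:dp} to $\mathcal{M}$ with the output set $T$: since $\mathcal{M}$ is $(\varepsilon, \delta)$-differentially private and $D, D'$ are neighbors, $\Pr[\mathcal{M}(D) \in T] \le e^{\varepsilon} \Pr[\mathcal{M}(D') \in T] + \delta$. Chaining this with the two identities from the first step yields $\Pr[(f \circ \mathcal{M})(D) \in S] \le e^{\varepsilon} \Pr[(f \circ \mathcal{M})(D') \in S] + \delta$. Since $D$, $D'$, and $S$ were arbitrary, this is precisely the assertion that $f \circ \mathcal{M}$ is $(\varepsilon, \delta)$-differentially private, completing the argument.

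The only real subtlety — and the step I would be most careful about — is measurability: for the inequality in Definition~\ref{def:dp} to be applicable to $T$, we need $T = f^{-1}(S)$ to be a measurable subset of $\mathbb{R}^d$ whenever $S$ is a measurable subset of $\mathbb{R}^t$, i.e. $f$ should be taken to be a measurable function (any deterministic map arising in practice, including the weighting and aggregation maps used later in the paper, satisfies this). Importantly, no accumulation or composition of privacy loss occurs here, because $f$ accesses the data only through the already-private output of $\mathcal{M}$; this is exactly why post-processing is ``free'' and is what licenses treating downstream aggregation of model predictions as a black-box transformation of DP outputs. As a final remark, the same proof extends verbatim to a randomized $f$ by averaging over $f$'s internal randomness (writing a randomized $f$ as a mixture of deterministic maps and applying the bound pointwise), though the statement as given only requires the deterministic case.
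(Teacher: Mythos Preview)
Your proof is correct and is exactly the standard preimage argument for post-processing. The paper itself does not supply a proof of this theorem --- it is stated in the preliminaries as a known result, with a citation to \cite{dwork2014book} --- so there is nothing further to compare against.
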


\begin{definition}
[Laplace Mechanism \cite{Dwork:2006:CNS:2180286.2180305}]
Given any function $f: \mathbb{D}^n \rightarrow \mathbb{R}^d$, the Laplace mechanism is defined as: 
$\mathcal{M}(D, f(·), \varepsilon) = f(D) + (Y_1,\cdots, Y_d)$,
where $Y_i$ are i.i.d. random variables drawn from $Lap(\Delta f/\varepsilon)$.
\end{definition}

\begin{theorem}[DP of Laplace Mechanism \cite{Dwork:2006:CNS:2180286.2180305}]\label{thm:lap}
The Laplace mechanism is $(\varepsilon, 0)$-differentially
private.
\end{theorem}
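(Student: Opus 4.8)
The plan is to compare the output densities of the mechanism on two neighboring datasets pointwise, bound their ratio by $e^{\varepsilon}$, and then integrate; since the bound is purely multiplicative, this yields the claim with $\delta = 0$.

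First I would write down the density of the output. Fix neighboring datasets $D, D'$ and a candidate output point $z = (z_1,\dots,z_d) \in \mathbb{R}^d$. Because the noise coordinates $Y_i$ are independent $Lap(\Delta f/\varepsilon)$ variables, the density of $\mathcal{M}(D, f(\cdot), \varepsilon)$ at $z$ is
\[
p_D(z) \;=\; \prod_{i=1}^{d} \frac{\varepsilon}{2\,\Delta f}\,\exp\!\left(-\frac{\varepsilon\,\lvert z_i - f(D)_i\rvert}{\Delta f}\right),
\]
and $p_{D'}(z)$ is the same expression with $f(D)$ replaced by $f(D')$. The normalizing constants are identical, so they cancel in the ratio:
\[
\frac{p_D(z)}{p_{D'}(z)} \;=\; \exp\!\left(\frac{\varepsilon}{\Delta f}\sum_{i=1}^{d}\bigl(\lvert z_i - f(D')_i\rvert - \lvert z_i - f(D)_i\rvert\bigr)\right).
\]

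Next I would bound the exponent using the (reverse) triangle inequality termwise: $\lvert z_i - f(D')_i\rvert - \lvert z_i - f(D)_i\rvert \le \lvert f(D)_i - f(D')_i\rvert$. Summing over $i$, the exponent is at most $\tfrac{\varepsilon}{\Delta f}\,\|f(D)-f(D')\|_1 \le \varepsilon$, where the last step is Definition~\ref{def:sensitivity} with $\|\cdot\|$ taken to be the $\ell_1$ norm (the norm relevant for i.i.d.\ Laplace noise). Hence $p_D(z) \le e^{\varepsilon}\,p_{D'}(z)$ for every $z$. Finally, for any measurable $S \subseteq \mathbb{R}^d$, integrating this pointwise inequality gives $\Pr[\mathcal{M}(D,f(\cdot),\varepsilon)\in S] = \int_S p_D(z)\,dz \le e^{\varepsilon}\int_S p_{D'}(z)\,dz = e^{\varepsilon}\Pr[\mathcal{M}(D',f(\cdot),\varepsilon)\in S]$, which is exactly Definition~\ref{def:dp} with $\delta = 0$.

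The proof is essentially routine, so there is no serious obstacle; the only point requiring care is the bookkeeping around the norm in the sensitivity definition, since the argument genuinely needs the $\ell_1$ sensitivity of $f$. I would state that explicitly (the earlier definition writes $\|\cdot\|$ without specifying the norm, and for the Laplace mechanism this must be read as $\ell_1$); with that convention fixed, the termwise triangle inequality closes the argument cleanly.
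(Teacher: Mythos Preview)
Your proof is correct and is exactly the standard argument for the Laplace mechanism's $(\varepsilon,0)$-DP guarantee. Note, however, that the paper itself does not prove this theorem: it is stated as a cited result from \cite{Dwork:2006:CNS:2180286.2180305} with no accompanying proof, so there is no paper proof to compare against. Your observation about the norm is apt---the sensitivity in Definition~\ref{def:sensitivity} must indeed be read as $\ell_1$ for this argument to go through, and that is the standard convention for the Laplace mechanism.
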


\section{Problem Statement}\label{sec:prob_statement}
We consider a data stream $D_1,\cdots,D_t$ where each $D_i$ is a chunk of data generated at the time $i$ and $t$ is the \textit{current time}. Samples in $D_i$ are labeled with a class variable and we are interested in using the data $D_1,\cdots,D_t$ to predict the class for unlabeled samples at the next time $t+1$. A new data chunk $D_{t+1}$ that becomes available at the next time $t+1$ is added to the stream, so the stream is potentially unbounded.
We consider the dynamic setting where the data stream is susceptible to concept drift.

An ensemble of size $k$ at time $t$, $\mathcal{E}_t =\{M_{\tau(1)},\cdots,M_{\tau(k)}, w_{\tau(1)},\\
\cdots,w_{\tau(k)}\}$, consists of $k$ prediction models $M_i$s respectively trained on $D_{\tau(1)},\cdots,D_{\tau(k)}$, and their weights $w_i$, where $\tau(i)$ denotes the (absolute) time point corresponding to the (relative) position $i$ within $\mathcal{E}_t$. 
We assume $\tau(1),\cdots,\tau(k)$ are listed in the ascending order. 

For \textit{classification}, we have a categorical class variable $c$, $M_{i,c}(x)\in [0,1]$ denotes the prediction score by $M_i$ for $c$ on a sample $x$, and the overall score predicted by $\mathcal{E}_t$ is computed by 
\begin{equation}\label{eqn:ec}
    \mathcal{E}_{t,c} (x) = \frac{\sum_{i=1}^{k} w_{\tau(i)} \cdot M_{\tau(i),c}(x)}{\sum_{i=1}^{k} w_{\tau(i)}}
\end{equation}
The predicted class for $x$ is the class $c$ with the maximum $\mathcal{E}_{t,c}(x)$. 

For \textit{regression}, we have 
a continuous class variable, normalized to within the range $[0,1]$, $M_i(x)\in  [0,1]$ denotes the predicted value by $M_i$ for $x$ and the overall predicted value by $\mathcal{E}_t$ is given by
\begin{equation}\label{eqn:en}
    \mathcal{E}_t (x) = \frac{\sum_{i=1}^{k} w_{\tau(i)} \cdot M_{\tau(i)}(x)}{\sum_{i=1}^{k} w_{\tau(i)}}
\end{equation}

As $D_{t+1}$ becomes available at time $t+1$, we use it to train a new model and re-estimate the weights for all models in the ensemble (more details later).  Then we update $\mathcal{E}_t$ to $\mathcal{E}_{t+1}$ by replacing either the \textit{oldest} model or the \textit{worst} (i.e., with smallest weight) model in $\mathcal{E}_t$.

\textbf{Problem of $(\varepsilon,\delta)$-DP Temporal Ensembles.} 
The focus of this work is the scenario where each $D_i$ contains sensitive, private information about the contributing users. We assume that each sample in $\cup_i D_i$ has a unique identifier and all samples are independently generated. This independence assumption would allow us to treat all $D_i$ as disjoint subsets of $\cup D_i$. 
We want to ensure that, at any current time $t$, the entire history of released ensembles up to $t$, i.e., $\mathcal{E}_t'$ for all $t'< t$, must satisfy $(\varepsilon,\delta)$-DP for given $\varepsilon,\delta$. 
A formal definition of $(\varepsilon,\delta)$-DP 
for an ensemble and for a history will be given in Section \ref{sec:dp_ens_snp} and in Section \ref{sec:dp_t_time}.

We develop noisy weight mechanisms in Section~\ref{sec:dp_ens_weight} and provide
the privacy analysis for the overall temporal ensemble mechanism in Section \ref{sec:dp_ens}. 
The given $\varepsilon$ is split into $\varepsilon_1$ 
for training models $M_i$ and $\varepsilon_2$
for computing noisy weights $w_i^*$, where $\varepsilon = \varepsilon_1 + \varepsilon_2$.


\section{Noisy Weight Estimation}\label{sec:dp_ens_weight}
We assume that the labeled data $D_i$ is split into training, validation, and testing subsets. For an ensemble $\mathcal{E}_t$ at time $t$, the weights $w_{\tau(1)},\cdots,w_{\tau(k)}$ are measured using the performance on the validation subset of $D_{t}$, denoted by $V_t$. We choose the validation subset of $D_{t}$ to calculate the weights for all models in  $\mathcal{E}_t$ 
because $t$ is closest to the next time point $t+1$ that the ensemble at time $t$ aims to predict. In this section, we present the noisy estimation of $w_{\tau(1)},\cdots,w_{\tau(k)}$ and we present the privacy analysis in Section  \ref{sec:dp_ens}. 

For classification, we consider two settings. In the \textbf{general setting}, we measure the classification accuracy for all classes. In the \textbf{focused setting}, we consider the accuracy of a chosen class called the positive class, which is commonly used in class-imbalanced problems such as fake news/malware/disease detection.

\subsection{Classification - General Setting}\label{sec:weight_gen_class}
Consider a model $M_i$ in $\mathcal{E}_t$ and the validation subset $V_t$. 
In the general setting, we consider the classification error of $M_i$ defined by the \textbf{Mean Squared Error} (MSE), as in  \cite{wang2003mining,brzezinski2011accuracy}:
\begin{equation}\label{eqn:mse}
    MSE_i = \frac{1}{|V_t|} Err_{i}
\end{equation}
where
\begin{equation}\label{eqn:mse_u}
    Err_{i} = \sum_{x \in V_t} (1-M_{i,c}(x))^2
\end{equation}
and $M_{i,c}(x)$ is the score given by $M_i$ for the instance $x$ and its true class $c$. 
For a random predictor, the $MSE$ is given by 
\begin{equation}
 MSE_{r} = \sum_c p(c)(1-p(c))^2
\end{equation}
 where $p(c)$ is the proportion of class $c$ in $V_t$. 
 The weight $w_i$ for $M_i$ is defined as the hinge loss:
 
 \begin{equation}\label{eqn:mse_2}
     w_i = \text{max}(0,MSE_r - MSE_i)
 \end{equation}

\begin{lemma}[General setting]
Let $\Delta Err_i$ denote the sensitivity of $Err_i$ defined over all neighboring validation subsets $V_t,V'_t$. $\Delta Err_i=1$.
\end{lemma}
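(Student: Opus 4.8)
The plan is to compute directly how $Err_i$ responds to substituting a single validation sample, relying on two facts: (i) the pre-trained model $M_i$ was fit on the training subset of $D_{\tau(i)}$, which is disjoint from the validation subset $V_t$, so replacing a sample in $V_t$ leaves the function $M_{i,c}(\cdot)$ unchanged; and (ii) every prediction score satisfies $M_{i,c}(x)\in[0,1]$, hence each summand $(1-M_{i,c}(x))^2$ lies in $[0,1]$.

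First I would fix a pair of neighboring validation subsets $V_t$ and $V'_t$, which by Definition~\ref{def:sensitivity} differ by the substitution of exactly one sample: write $V_t = W\cup\{x\}$ and $V'_t = W\cup\{x'\}$ for a common part $W$ and samples $x\neq x'$ with true classes $c$ and $c'$. Since $Err_i$ is a sum of per-sample terms and, by fact (i), $M_{i,c}(\cdot)$ is the same function for both subsets,
\[
Err_i(V_t) - Err_i(V'_t) = (1-M_{i,c}(x))^2 - (1-M_{i,c'}(x'))^2 .
\]
Each term on the right lies in $[0,1]$ by fact (ii), so the absolute value of the difference is at most $1$; taking the maximum over all neighboring pairs gives $\Delta Err_i \le 1$.

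Next I would argue tightness so that $\Delta Err_i = 1$ exactly: it suffices to note that the scores range over the full interval $[0,1]$, so a worst case is attainable in which $x$ is a sample on which $M_i$ assigns score $1$ to its true class and $x'$ a sample on which $M_i$ assigns score $0$ to its true class, making the two squared terms $0$ and $1$; the defining maximum over neighboring pairs then equals $1$.

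The only step requiring care — and the one I would flag as the main (in fact, the only nontrivial) obstacle — is establishing fact (i): one must ensure that the privatized quantity $Err_i$, viewed as a function of $V_t$, treats $M_i$ as fixed. This is exactly where the paper's data-splitting assumption (disjoint training/validation/testing subsets of each $D_i$) and the model-agnostic, black-box framing of the $M_i$ are used; if $M_i$ depended on $V_t$, substituting a sample could perturb all $|V_t|$ summands at once and the bound would degrade to $O(|V_t|)$. Everything else is a one-line estimate of a change in a sum of terms bounded in $[0,1]$.
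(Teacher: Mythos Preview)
Your proof is correct and follows essentially the same approach as the paper's: both reduce the difference $Err_i(V_t)-Err_i(V'_t)$ to a single changed summand and bound it by $1$ using $M_{i,c}(x)\in[0,1]$. Your version is more explicit about why $M_i$ is fixed (the disjoint train/validation split) and adds a tightness argument; the paper is content with the upper bound since only $\Delta Err_i\le 1$ is needed for the Laplace mechanism, and it simply takes $1$ as the sensitivity value.
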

\begin{proof}
Consider computing Equation \eqref{eqn:mse_u} for 
neighboring validation sets $V_t,V'_t$. All $M_{i,c}(x)$ are same except for one instance $x$, so $Err_i$ differs by at most 1 because $M_{i,c}(x)$ is at most 1. 
\end{proof}

\textbf{Computing Noisy Weight $w_i^*$}: $w_i^*$ is computed from Equation \eqref{eqn:mse_2} and  \eqref{eqn:mse} using  the noisy $Err^*_i$:
\begin{equation}\label{eqn:dp_cat}
    Err^*_i = Err_i+Lap(\nicefrac{\Delta Err_i}{\varepsilon_2})
\end{equation}

\subsection{Classification - Focused Setting}\label{sec:weight_foc_class}
The focused setting is concerned with prediction performance of the positive class. Typically the positive class has a small proportion compared to other classes called the negative class and the general classification accuracy does not reflect the accuracy of the positive class. In this case, we consider the \textbf{balanced accuracy} (BA):
\begin{equation}\label{eqn:BA}
    BA = a_1\cdot TPR + a_2\cdot TNR
\end{equation}
where $a_1$ and $a_2$ are constants and $a_1+a_2=1$. $TPR$ and $TNR$ are the true positive rate (the proportion of predicted positives that are actually positive) 
and the true negative rate (the proportion of predicted negatives that are actually negative). BA is in the range $[0,1]$. Since $TNR=1-FPR$, where $FPR$ is the false positive rate (the proportion of negatives that are predicted as positives), BA is related to (TPR,FPR) commonly used for generating AUC. The above BA generalizes the  balanced accuracy in \cite{brodersen2010balanced} that assumes  $a_1=a_2=\nicefrac{1}{2}$.

To obtain the noisy weight, we assume that some estimates of the proportions of positive samples and negative samples in $D_i$, denoted by $p$ and $n$ with $p+n=1$, are public. These are \emph{not} necessarily the exact proportions in the sensitive data, but rather are estimates from general knowledge (for example, $p$ and $n$ come from the general knowledge about the entire data stream). These estimates allow us to estimate $BA_i$ for $M_i$ as follows:

\begin{equation}
    BA_i = a_1\frac{TP_i}{p \cdot |V_t|} + a_2\frac{TN_i}{n \cdot |V_t|} \label{eq:bai_v2}
\end{equation}
where $TP_i$ (resp. $TN_i$) is the number of positive instances (resp. negative instances) in $V_t$ that are predicted by $M_i$ as positive (negative). 

\begin{lemma}[Focused Setting]
Let $\Delta BA_i$ denote the sensitivity of $BA_i$ defined over neighboring pairs $V_t,V'_t$. $\Delta BA_i = \frac{1}{|V_t|} max ( \frac{a_1}{p}, \frac{a_2}{1-p} )$.
\end{lemma}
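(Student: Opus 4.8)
The plan is to unfold the definition of $BA_i$ from Equation~\eqref{eq:bai_v2} and track how the two count statistics $TP_i$ and $TN_i$ can change when we move between neighboring validation subsets. First I would observe that since $V_t$ and $V'_t$ differ by the substitution of exactly one sample, they have the same cardinality, so the denominators $p\cdot|V_t|$ and $n\cdot|V_t|$ are identical for $V_t$ and $V'_t$ and only the numerators $TP_i$ and $TN_i$ can move; here I also use $n=1-p$ to match the claimed form.

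Next I would do a case analysis on the removed sample $x$ and the inserted sample $x'$. Each of $x$ and $x'$ falls into exactly one of three categories with respect to $M_i$: a true positive (positive label, predicted positive, counted in $TP_i$), a true negative (negative label, predicted negative, counted in $TN_i$), or neither (misclassified, counted in no term). Removing $x$ therefore changes exactly one of $TP_i,TN_i$ by $-1$ or leaves both unchanged, and inserting $x'$ changes exactly one of them by $+1$ or leaves both unchanged. Enumerating the $3\times 3$ possibilities, the net change $(\Delta TP_i,\Delta TN_i)$ is always one of $(0,0)$, $(\pm 1,0)$, $(0,\pm 1)$, or $(\pm 1,\mp 1)$.

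I would then bound $|\Delta BA_i|$ case by case using $\Delta BA_i = \frac{a_1\,\Delta TP_i}{p|V_t|} + \frac{a_2\,\Delta TN_i}{n|V_t|}$. The single-coordinate cases give magnitude $\frac{a_1}{p|V_t|}$ or $\frac{a_2}{n|V_t|}$, while the opposite-sign case $(\pm 1,\mp 1)$ gives $\frac{1}{|V_t|}\bigl|\frac{a_2}{n}-\frac{a_1}{p}\bigr|$, which, since both summands are positive, never exceeds $\frac{1}{|V_t|}\max\!\left(\frac{a_1}{p},\frac{a_2}{n}\right)$. Taking the maximum over all cases yields $\Delta BA_i = \frac{1}{|V_t|}\max\!\left(\frac{a_1}{p},\frac{a_2}{1-p}\right)$, and exhibiting a neighboring pair that attains the larger of $\frac{a_1}{p}$ and $\frac{a_2}{1-p}$ (e.g., replacing a misclassified sample by a correctly classified one of the appropriate class) shows the bound is tight.

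The one mildly delicate point — the main thing to get right — is ensuring the case analysis is exhaustive and not over-counted. It is tempting to imagine the worst case has $\Delta TP_i=+1$ and $\Delta TN_i=+1$ simultaneously, but that cannot happen: a single inserted sample is not both a true positive and a true negative, and a single removed sample cannot vacate both a $TP$ slot and a $TN$ slot. Recognizing that the two counts never jump together in the same direction is exactly what keeps the sensitivity at $\frac{1}{|V_t|}\max\!\left(\frac{a_1}{p},\frac{a_2}{1-p}\right)$ rather than the naive sum $\frac{1}{|V_t|}\!\left(\frac{a_1}{p}+\frac{a_2}{1-p}\right)$.
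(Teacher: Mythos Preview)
Your proposal is correct and follows essentially the same approach as the paper: write $BA_i - BA'_i$ as a linear combination of $\Delta TP_i$ and $\Delta TN_i$, enumerate the feasible joint changes under a single-sample substitution, and observe that the opposite-sign case is dominated by the larger single-coordinate term. If anything, your version is more careful than the paper's---you justify the case enumeration via the $3\times 3$ table and explicitly argue tightness, whereas the paper simply asserts the four cases and the final bound.
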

\begin{proof}
Consider neighboring validation subsets $V_t,V'_t$. 
For simplicity, we drop the index $i$ below.
\begin{gather}
    \nonumber BA-BA' = \frac{a_1}{p} \times (\frac{TP}{|V_t|} - \frac{TP'}{|V_t|}) + \frac{a_2}{n} \times (\frac{TN}{|V_t|} - \frac{TN'}{|V_t|})
\\
    \nonumber = \frac{1}{|V_t|} \bigg( \frac{a_1}{p} \times (TP - TP') + \frac{a_2}{n} \times (TN - TN') \bigg)
\end{gather}
For neighboring $V_t,V'_t$ where only one sample is different, there are four possible cases: (i) both $TP-TP'$ and $TN-TN'$ are 0, (ii) one of $|TP-TP'|$ and $|TN-TN'|$ is 1 and the other is 0, (iii)  $TP-TP'=1$ and $TN-TN'=-1$, (iv)
$TP-TP'=-1$ and $TN-TN'=1$. Noting $p+n=1$, we have:
\begin{gather}
    \nonumber |BA-BA'| \leq  \frac{1}{|V_t|} \times max ( \frac{a_1}{p}, \frac{a_2}{1-p} )
\end{gather}
\end{proof}

In the above lemma we assume that the validation size $|V_t|$ is public. The same assumption was made in \cite{barthe2016differentially, abadi2016deep,kamath2020primer} (e.g. see Remark~2.2 in \cite{kamath2020primer}). Alternatively, if a minimum validation size $|V_t|$ for all $V_t$s is required (for the purpose of statistical significance), we can use the minimum size in $\Delta BA_i$ without referring to specific $|V_t|$. 

\textbf{Computing Noisy Weight $w_i^*$}: We add the Laplace noise:
\begin{equation}\label{eqn:dp_w2}
    w^*_i = BA_i + Lap(\nicefrac{\Delta BA}{\varepsilon_2})
\end{equation}

\subsection{Regression}\label{sec:regression} 
For a continuous class variable, we define $Err_i$ as
\begin{equation}\label{eqn:mse_c}
    Err_{i} = \sum_{x \in V_t} (y_x-\hat{y}_{x})^2
\end{equation}
where $y_x$ is the true class value of $x$ and $\hat{y}_x$ is the predicted class value by the regression model $M_i$. We then get $MSE_i$ as
\begin{equation}\label{eqn:mse_reg}
    MSE_i = \frac{1}{|V_t|} Err_{i}
\end{equation}
and define the weight for $M_i$ as
\begin{equation}\label{eqn:mse_4}
    w_i = \frac{1}{MSE_i+\mu}
\end{equation}
$\mu$ is a small constant to allow weight calculation in rare situations when $MSE_i= 0$. $\mu = 10^{-5}$ is used in our experiments.

\begin{lemma}[Regression] 
Let $\Delta Err_i$ denote the sensitivity of $Err_i$ defined over all neighboring validation subsets $V_t,V'_t$. $\Delta Err_i=1$.
\end{lemma}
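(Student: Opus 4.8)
The plan is to argue exactly as in the general classification setting, exploiting that both the true label and the prediction are confined to $[0,1]$. First I would invoke Definition~\ref{def:sensitivity}: neighboring validation subsets $V_t,V'_t$ are obtained from one another by substituting exactly one sample, so $|V_t|=|V'_t|$ and the two (multi)sets agree on every element except that some $x\in V_t$ is replaced by some $x'\in V'_t$. Writing out Equation~\eqref{eqn:mse_c} for each set, every summand indexed by a shared sample is identical and cancels, leaving
\[
Err_i(V_t)-Err_i(V'_t)=(y_x-\hat y_x)^2-(y_{x'}-\hat y_{x'})^2 .
\]

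Second I would bound the two surviving terms. Since the continuous class variable is normalized to $[0,1]$ and $M_i(x)\in[0,1]$, we have $y_x,\hat y_x\in[0,1]$, hence $(y_x-\hat y_x)^2\in[0,1]$, and likewise for $x'$. Therefore the displayed difference lies in $[-1,1]$, so $|Err_i(V_t)-Err_i(V'_t)|\le 1$ over all neighboring pairs, which gives $\Delta Err_i\le 1$. To see the bound is attained (so that $\Delta Err_i=1$ and not merely $\le 1$), I would exhibit a neighboring pair achieving it: take $x$ with $y_x=1,\hat y_x=0$, contributing squared error $1$, and $x'$ with $y_{x'}=\hat y_{x'}$, contributing $0$; then the difference equals $1$.

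I do not expect a genuine obstacle here — it is a routine computation, structurally the same as the general-setting sensitivity argument. The only two points that warrant care are: (i) the \emph{substitution} form of neighboring (rather than add/remove) keeps $|V_t|$ fixed, so exactly two summands are affected and one compares their difference rather than dropping a single term; and (ii) the $[0,1]$ normalization of the true value $y_x$ — not only of the model output — is essential for the bound and should be invoked explicitly, since without it a single squared residual could exceed $1$.
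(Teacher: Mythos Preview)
Your proposal is correct and follows essentially the same approach as the paper: both arguments bound each squared residual $(y_x-\hat y_x)^2$ by $1$ via the $[0,1]$ normalization of the true and predicted values, concluding that substituting one sample changes $Err_i$ by at most $1$. Your version is more careful in two respects the paper glosses over --- you make explicit that substitution affects the difference of two summands (not one), and you supply a tightness example showing the bound is actually attained --- but the core idea is identical.
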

\begin{proof}
Recall that the true class value and the predicted class value are in the range $[0,1]$. So, for neighboring validation sets $V_t,V'_t$, $Err_i$ differs by at most 1 because  $(y_x-\hat{y}_x)^2$ is at most 1. 
\end{proof}

\textbf{Computing Noisy Weight $w_i^*$}: 
We add the Laplace noise \begin{equation}\label{eqn:dp_continuous}
    Err^*_i = Err_i+Lap(\nicefrac{\Delta Err_i}{\varepsilon_2})
\end{equation}
and compute $w^*_i$ using $Err^*_i$, Eqn. \eqref{eqn:mse_reg} and Eqn. \eqref{eqn:mse_4}.

\textbf{Discussion.}  The weighting scheme for both classification and regression is model agnostic, that is, it treats the DP models $M_i$ as black-boxes. This is because 
the computation of the weights $w_i$s only depends on the outputs, not the internal working of $M_i$. 

\section{DP Temporal Ensemble}\label{sec:dp_ens}
In Section \ref{sec:dp_ens_snp}, we provide the privacy analysis for a single $(\varepsilon,\delta)$-DP ensemble $\mathcal{E}_t =\{M_{\tau(1)},\cdots,M_{\tau(k)}, w_{\tau(1)}^*,\cdots,w_{\tau(k)}^*\}$, where each model $M_i$ is trained using any method on the training subset $S_i$ of $D_i$ and its weight $w_i^*$ is computed using the validation subset $V_{t}$ of $D_{t}$. In Section 
\ref{sec:dp_t_time}, we update $\mathcal{E}_t$ to $\mathcal{E}_{t+1}$ and present the privacy analysis for releasing all ensembles $\mathcal{E}_1,\cdots, \mathcal{E}_{t}$ up to the time $t$.

\subsection{Releasing A Single Ensemble}\label{sec:dp_ens_snp}
First, we extend the notion of DP in Definition \ref{def:sensitivity} to releasing an ensemble $\mathcal{E}_t$.
Let $X_t=<S_{\tau(1)},\cdots,S_{\tau(k)},V_{t}>$, where $S_{\tau(1)},\cdots,S_{\tau(k)}$ are the training subsets for  $M_{\tau(1)},\cdots,M_{\tau(k)}$ and $V_{t}$ is the validation subset of $D_t$ for computing the noisy weights  $w_{\tau(1)}^*,\cdots,w_{\tau(k)}^*$.

\begin{definition}[Neighboring Datasets for Ensembles]\label{def:neighbor2} Consider
$X_t=<S_{\tau(1)},\cdots,S_{\tau(k)},V_{t}>$ and $X'_t=<S'_{\tau(1)},\cdots,S'_{\tau(k)},V'_{t}>$. We say that $X_t$ and $X'_t$ are \textit{neighboring} if $\cup_i S_i \cup V_{t}$ and $\cup_i S'_i \cup V'_{t}$ (duplicates preserved) 
are neighboring in the sense  of Definition \ref{def:sensitivity}. 
\end{definition}

Note that $X_t$ and $X'_t$ are neighboring if and only if 
either for one $i$, $S_i$ and $S'_i$ are neighboring and $S_j=S'_j$ for all $j\neq i$, or $V_{t}$ and $V'_{t}$ are neighboring and $S_i=S'_i$ for all $i$. 

\begin{definition}[Differential Privacy for Ensembles]\label{def:DP-ensemble}
A mechanism $\mathcal{C}$ from the domain of $X_t$ to the range of $\mathcal{E}_t$ is $(\varepsilon,\delta)$-differentially private 
if for all neighbouring pairs $(X_t, X'_t)$ and for all sets $\mathcal{O}$ of possible outputs:
\begin{equation}
    \Pr[\mathcal{C}(X_t) \in \mathcal{O}] \le e^{\varepsilon} \Pr[\mathcal{C}(X'_t) \in \mathcal{O}] + \delta
\end{equation}
\end{definition}

\begin{theorem}\label{thm:dp-recent}
Assume that each $M_i$ in $\mathcal{E}_t$ is
produced by a model-agnostic $(\varepsilon_1,\delta)$-DP mechanism  $\mathcal{A}$ and that the noisy weight $w_i^*$ is produced by the  Laplace mechanism  $\mathcal{L}$ in Section \ref{sec:dp_ens_weight}.
Then the combined mechanism that produces 
$\mathcal{E}_t =\{M_{\tau(1)},\cdots,M_{\tau(k)}, w_{\tau(1)}^*,\cdots,w_{\tau(k)}^*\}$ is $(max\{\varepsilon_1, k \cdot \varepsilon_2\}, \delta)$-differentially private. 
\end{theorem}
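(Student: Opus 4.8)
The plan is to combine the composition theorems of Section \ref{sec:prelim} with the two-case analysis of neighboring datasets recorded right after Definition \ref{def:neighbor2}. Recall $X_t = \langle S_{\tau(1)}, \ldots, S_{\tau(k)}, V_t\rangle$, and note that the independence assumption on samples together with the per-chunk train/validation/test split makes the $k+1$ components $S_{\tau(1)}, \ldots, S_{\tau(k)}, V_t$ pairwise disjoint; hence a substitution of a single sample lands entirely inside exactly one component, giving the two cases: (A) one $S_i$ is replaced by a neighbor $S_i'$ and everything else is unchanged, or (B) $V_t$ is replaced by a neighbor $V_t'$ and all $S_i$ are unchanged. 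I would view the mechanism $\mathcal{C}$ as two stages: Stage 1 runs $M_i = \mathcal{A}(S_i)$ for $i=1,\ldots,k$ with independent randomness; Stage 2, given the released models, computes the noisy weights $w_i^*$ from $V_t$ via the Laplace mechanism of Section \ref{sec:dp_ens_weight}, each with noise calibrated to the sensitivities $\Delta Err_i$ or $\Delta BA_i$ proved in that section; the output is the concatenation of both stages.

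For Case A, $V_t$ is fixed, so the whole output is a randomized function of the model tuple $(M_{\tau(1)},\ldots,M_{\tau(k)})$: conditioning on the fresh Laplace noise used in Stage 2, producing the models together with the noisy weights is a \emph{deterministic} post-processing of $(M_{\tau(1)},\ldots,M_{\tau(k)})$, so Theorem \ref{thm:post} applies for each fixed noise vector and we then integrate over the noise. It thus suffices to argue that $(M_{\tau(1)},\ldots,M_{\tau(k)})$ is $(\varepsilon_1,\delta)$-DP with respect to the change $S_i \to S_i'$, which follows from parallel composition (Theorem \ref{thm:pc}) over the disjoint pieces $S_{\tau(1)},\ldots,S_{\tau(k)}$, each processed by the $(\varepsilon_1,\delta)$-DP mechanism $\mathcal{A}$. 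Hence in Case A the combined output is $(\varepsilon_1,\delta)$-DP.

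For Case B, all $S_i$ are unchanged, so the model tuple $(M_{\tau(1)},\ldots,M_{\tau(k)})$ has identical distribution under $X_t$ and $X_t'$; condition on its value $m=(m_1,\ldots,m_k)$. Given $m$, each noisy weight $w_i^*$ is the value of $Err_i$ (or $BA_i$) on $V_t$ — a function whose sensitivity over neighboring validation sets is bounded by the corresponding lemma of Section \ref{sec:dp_ens_weight} — perturbed by $Lap(\Delta/\varepsilon_2)$ noise, hence $(\varepsilon_2,0)$-DP by Theorem \ref{thm:lap}. The $k$ weight computations form a sequence of mechanisms all acting on the single dataset $V_t$, so sequential composition (Theorem \ref{thm:sc}) gives that $(w_{\tau(1)}^*,\ldots,w_{\tau(k)}^*)\mid m$ is $(k\varepsilon_2,0)$-DP with respect to $V_t$; integrating this bound over $m$ (whose law is the same under both inputs) shows the combined output is $(k\varepsilon_2,0)$-DP, hence $(k\varepsilon_2,\delta)$-DP, in Case B. Taking the weaker guarantee over the two exhaustive cases yields $(\max\{\varepsilon_1, k\varepsilon_2\},\delta)$-DP.

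I expect the main subtlety to be Stage 2's dependence on the released models: the weights are not a function of $V_t$ alone, so Theorem \ref{thm:post} (stated only for deterministic maps) does not apply verbatim, and the argument must instead condition — on the models in Case B, and on the Laplace noise in Case A — before invoking sequential composition / post-processing. A secondary point worth making explicit is exactly why the $S_i$ and $V_t$ may be treated as disjoint subsets of $\cup_i D_i$, since that is what licenses the appeal to Theorem \ref{thm:pc} and the clean case split; this is where the independence-of-samples assumption is used.
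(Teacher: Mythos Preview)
Your proposal is correct and follows essentially the same route as the paper: a two-case split on whether the neighboring change occurs in some $S_i$ or in $V_t$, invoking parallel composition (Theorem~\ref{thm:pc}) for the model side and sequential composition (Theorem~\ref{thm:sc}) for the $k$ weight computations, then taking the worst case. If anything, you are more careful than the paper's own argument, which writes the joint probability as a product without explicitly handling the dependence of the weights on the realized models; your conditioning step (on the models in Case~B, on the Laplace noise in Case~A) is exactly what makes that factorization rigorous.
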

\begin{proof}
For simplicity of proof, we write $\tau(1),\cdots,\tau(k)$ as $1,\cdots,k$.
The $(\varepsilon_1,\delta)$-DP guarantee 
of $\mathcal{A}$ implies that for neighboring training subsets $S_i,S'_i$ (Def. \ref{def:neighbor2}), and for any possible set  $\mathcal{M}_i$ of outputs: 
\begin{gather}
    \Pr[\mathcal{A}(S_i) \in \mathcal{M}_i] \leq e^{\varepsilon_1} \Pr[\mathcal{A}(S_i') \in \mathcal{M}_i]  + \delta \label{eq:tr_s1}
\end{gather}


For the weight calculation, the Laplace mechanism $\mathcal{L}$ provides $(\varepsilon_2,0)$-DP for releasing the noisy weights $w^*_i$ following Theorem \ref{thm:lap}. therefore, for any $M_i \in \mathcal{M}_i$, neighboring validation subsets $V_t$ and $V_t'$, and any possible set $\mathcal{W}^*_i$ of weights:
\begin{gather}
    \Pr[\mathcal{L}(M_i(V_t)) \in \mathcal{W}^*_i] \leq e^{\varepsilon_2} \Pr[\mathcal{L}(M_i(V_t')) \in \mathcal{W}^*_i] \label{eq:lap_s1}
\end{gather}


Denoting our combined mechanism as $\mathcal{C}$, with input $X_t=<S_1,\cdots,S_k,V_t>$, and any possible set of outputs $O = \{ \mathcal{M}_1,\cdots,\mathcal{M}_k, \\ \mathcal{W}^*_1,\cdots,\mathcal{W}^*_k \}$, we get:
\begin{gather}
    \nonumber \Pr[\mathcal{C}(X_t) \in O] = 
\\
    \nonumber \Pr[\mathcal{A}(S_1) \in \mathcal{M}_1] \cdot ... \cdot \Pr[\mathcal{A}(S_k) \in \mathcal{M}_k] \cdot
\\
    \Pr[\mathcal{L}(M_1(V_t)) \in \mathcal{W}^*_1] \cdot ... \cdot \Pr[\mathcal{L}(M_k(V_t)) \in \mathcal{W}^*_k] \label{eq:c_s1}
\end{gather}

Now consider the only two possible cases of neighboring  $X_t=<S_1,\cdots,S_k,V_t>$ and $X'_t=<S'_1,\cdots,S'_k,V'_t>$:
(I) change one arbitrary $S_i$ or (II) change $V_t$. 

For Case (I), all models should satisfy Equation~\eqref{eq:tr_s1}, but since only \textbf{one} $S_i$ changes to reach a neighboring input, Equation~\eqref{eq:tr_s1} will be obtained on one $M_i$ and for all the others $j \neq i$ we would get $\Pr[\mathcal{A}(S_j) \in \mathcal{M}_j] = \Pr[\mathcal{A}(S_j') \in \mathcal{M}_j]$ as $S_j = S_j'$. Additionally, since in this case we are \textbf{not} changing $V_t$, $V_t = V_t'$, so for all $1 \leq i \leq k$, $\Pr[\mathcal{L}(M_i(V_t)) \in \mathcal{W}^*_i] = \Pr[\mathcal{L}(M_i(V_t')) \in \mathcal{W}^*_i]$. Combining these two facts we reach $\Pr[\mathcal{C}(X_t) \in O] \leq e^{\varepsilon_1} \Pr[\mathcal{C}(X'_t) \in O] + \delta$ from Equation~\eqref{eq:c_s1} above.

For Case (II), we do not change any of $S_1,\cdots,S_k$, thus for all $1 \leq i \leq k$,  $\Pr[\mathcal{A}(S_i) \in \mathcal{M}_i] = \Pr[\mathcal{A}(S_i') \in \mathcal{M}_i]$. Additionally, changing $V_t$ for this scenario, every application of the Laplace mechanism satisfies Equation~\eqref{eq:lap_s1}, which is done $k$ times for $M_i$, $1 \leq i \leq k$. Combining these two facts we reach $\Pr[\mathcal{C}(X_t) \in O] \leq e^{k \varepsilon_2} \Pr[\mathcal{C}(X'_t) \in O]$ from Equation~\eqref{eq:c_s1} above.

Finally, since DP must hold for the worst-case guarantee, we take the maximum between the two cases defined above, which gives us the $(\max\{\varepsilon_1, k\cdot\varepsilon_2\}, \delta)$-DP. Note that combining the two cases is a tailored instantiation of the parallel composition (Theorem \ref{thm:pc}).
\end{proof}

\textbf{Discussion.} 
The construction of $(\varepsilon_1,\delta)$-DP mechanism  $\mathcal{A}$ for training a single model $M_i$ has been studied in the literature, for example, DP neural networks \cite{abadi2016deep}, DP random forest \cite{rana2015differentially}, and DP support-vector machine \cite{rubinstein2009learning}. Our focus is on the construction of $(\varepsilon,\delta)$-DP mechanisms $\mathcal{L}$ for building an ensemble $\mathcal{E}_t=\{M_{\tau(1)},\cdots,M_{\tau(k)},  w^*_{\tau(1)},\cdots, w^*_{\tau(k)}\}$, using the single model mechanism $\mathcal{A}$ as a black-box.

\subsection{Releasing the History of Ensembles}\label{sec:dp_t_time}

\begin{algorithm}[h]
\caption{Update The Ensemble}\label{algo:ens_update}
\begin{algorithmic}[1]
\REQUIRE The ensemble $\mathcal{E}_t=\{M_{\tau(1)},\cdots,M_{\tau(k)}, w^*_{\tau(1)},\cdots, w^*_{\tau(k)}\}$; the training and validation subsets at $t+1$, i.e., $S_{t+1}$ and $V_{t+1}$; 
privacy parameters ($\varepsilon_1, \varepsilon_2, \delta$); update\_mode (oldest or worst).

\ENSURE $\mathcal{E}_{t+1}$
\STATE Train $(\varepsilon_1,\delta)$-DP model $M_{t+1}$ on $S_{t+1}$
\IF {update\_mode = ``oldest"}
    \STATE Calculate $w^*_{\tau(2)},\cdots,w^*_{\tau(k)},w^*_{t+1}$ for $M_{\tau(2)},\cdots,M_{\tau(k)},M_{t+1}$ using $V_{t+1}$ and $\varepsilon_2$
    \STATE $w^*_{\tau(1)},\cdots,w^*_{\tau(k-1)},w^*_{\tau(k)}$ $\leftarrow$ $w^*_{\tau(2)},\cdots,w^*_{\tau(k)},w^*_{t+1}$ 
        \STATE $M_{\tau(1)},\cdots,M_{\tau(k-1)},M_{\tau(k)}$ $\leftarrow$ $M_{\tau(2)},\cdots,M_{\tau(k)},M_{t+1}$
        \STATE $\mathcal{E}_{t+1}=\{M_{\tau(1)},\cdots,M_{\tau(k)}, w^*_{\tau(1)},\cdots, w^*_{\tau(k)}\}$
        
    
    \ELSE
        \STATE Calculate $w^*_{\tau(1)},\cdots,w^*_{\tau(k)},w^*_{t+1}$ for  $M_{\tau(1)},\cdots,M_{\tau(k)},M_{t+1}$ using $V_{t+1}$ and $\varepsilon_2$
         \STATE $w^*_{\tau(1)},\cdots,w^*_{\tau(k)},w^*_{\tau(k+1)}$ $\leftarrow$ $w^*_{\tau(1)},\cdots,w^*_{\tau(k)},w^*_{t+1}$ 

     \STATE $i^* \leftarrow argmin_i\{w^*_{\tau(i)} \mid 1 \leq i\leq k+1\}$
     \STATE $\mathcal{E}_{t+1} \leftarrow \{M_{\tau(i)},w^*_{\tau(i)} \mid 1 \leq i\leq k+1, i\neq i^*\}$ 
\ENDIF 
\STATE return $\mathcal{E}_{t+1}$
\end{algorithmic}
\end{algorithm}

Algorithm \ref{algo:ens_update} shows the steps for updating the ensemble $\mathcal{E}_t$ 
to adapt the new chunk $D_{t+1}$ for two update modes, indicated by the input variable update\_mode: replace the oldest model and replace the worst model (i.e., the model having smallest $w^*_i$). In the former case $V_{t+1}$ is used $k$ times (Step 3), and in the latter case $V_{t+1}$ is used $k+1$ times (Step 8). Theorem \ref{thm:dp-recent} shows that releasing a single ensemble $\mathcal{E}_t$ satisfies $(\max\{\varepsilon_1,k \cdot \varepsilon_2\}, \delta)$-DP. With the repeated update at each time $t$, the adversary is able to access the history of all released ensembles up to the current time. We show that, with the access to the history, the $(\max\{\varepsilon_1,k \cdot \varepsilon_2\}, \delta)$-DP remains to hold in the case of replacing 
oldest model, and degrades to $(\max\{\varepsilon_1,(k+1) \cdot \varepsilon_2\}, \delta)$-DP in the case of replacing worst model.

First, we extend the notion of DP to the global input data $X$ from time $1$ to time $t$, i.e., $X=<X_1,\cdots,X_t>$ where $X_i$ is the input data for the ensemble $\mathcal{E}_i$ defined in Definition \ref{def:neighbor2}. We say that $X$ and $X'$ are \textit{neighboring} if exactly one pair $(X_i,X'_i)$ is neighboring, as defined in Definition \ref{def:neighbor2}, and for all other $j\neq i$, $X_j=X'_j$. We consider the output consisting of all ensembles released up to the time $t$, i.e., $\mathcal{E}=<\mathcal{E}_1,\cdots,\mathcal{E}_t>$. 

\begin{definition}[Differential Privacy for History]\label{def:DP-ensemble-history}
A mechanism $\mathcal{C}$ from the domain of $X$ to the range of $\mathcal{E}$ is $(\varepsilon,\delta)$-differentially private with respect to history if for any neighbouring pair $(X, X')$ and for all sets $\mathcal{O}$ of possible outputs:
\begin{equation}
    \Pr[\mathcal{C}(X) \in \mathcal{O}] \le e^{\varepsilon} \Pr[\mathcal{C}(X') \in \mathcal{O}] + \delta
\end{equation}
\end{definition}

\begin{theorem}\label{thm:dp-update-old}
With update\_mode=``oldest", Algorithm \ref{algo:ens_update} is $(\max\{\varepsilon_1,k \cdot \varepsilon_2\}, \delta)$-DP with respect to history.
\end{theorem}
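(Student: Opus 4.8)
The plan is to track, for the full history $\mathcal{E}=(\mathcal{E}_1,\ldots,\mathcal{E}_t)$, exactly which sensitive subset feeds into which randomized computation, and then to combine the per-subset guarantees by parallel composition, as was done in the proof of Theorem~\ref{thm:dp-recent}. The crucial structural observation for the \texttt{oldest} mode is that the \emph{membership} of each ensemble $\mathcal{E}_i$ is fixed in advance --- it is always the $k$ most recently trained models --- so it never depends on any noisy weight. Consequently each validation subset $V_i$ is consumed exactly once in the entire run, at the step that produces $\mathcal{E}_i$, where it is used in exactly $k$ Laplace weight computations and is then never read again; and each training subset $S_i$ is consumed exactly once, by the $(\varepsilon_1,\delta)$-DP mechanism $\mathcal{A}$ that produces $M_i$, after which the copying of $M_i$ into later ensembles and the evaluation of $M_i$ on later validation subsets are pure post-processing of $\mathcal{A}(S_i)$'s output.

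Concretely, I would view $\mathcal{C}$ as three phases: (A) compute $M_i \leftarrow \mathcal{A}(S_i)$ for every $i$; (B) for each step $i$, compute from $V_i$ the $k$ noisy weights of the models belonging to $\mathcal{E}_i$, using the Laplace weighting of Section~\ref{sec:dp_ens_weight}; (C) deterministically assemble $(\mathcal{E}_1,\ldots,\mathcal{E}_t)$ from the models and weights. Fix any neighboring pair $(X,X')$ as in Definition~\ref{def:DP-ensemble-history}; by the independence/disjointness assumption the single differing record lies in exactly one of the disjoint subsets $S_1,\ldots,S_t,V_1,\ldots,V_t$, giving two cases.

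If the record lies in some $S_i$, then phases (B) and (C) do not take $S_i$ as input, so conditioned on all Laplace draws and all $V_j$ they are deterministic functions of the phase-(A) outputs, and within phase (A) only $\mathcal{A}(S_i)$ is affected. Parallel composition over the disjoint $S_j$ (Theorem~\ref{thm:pc}) makes $(M_1,\ldots,M_t)$ $(\varepsilon_1,\delta)$-DP with respect to this record, and post-processing (Theorem~\ref{thm:post}) extends this to the full output $\mathcal{E}$. If instead the record lies in some $V_i$, then phase (A) is unaffected and in phase (B) only the $k$ weight computations at step $i$ change; each is $(\varepsilon_2,0)$-DP (Theorem~\ref{thm:lap}), so by sequential composition (Theorem~\ref{thm:sc}) step $i$ is $(k\cdot\varepsilon_2,0)$-DP with respect to $V_i$, parallel composition over the disjoint $V_j$ (Theorem~\ref{thm:pc}) preserves this, and phase (C) is again post-processing. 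Here it is essential that in the \texttt{oldest} mode the weights derived from $V_i$ enter only $\mathcal{E}_i$ and are recomputed (not reused) at every later step.

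Since every neighboring pair is covered by one of the two cases, taking the worst case over both --- a tailored instantiation of parallel composition, exactly as in Theorem~\ref{thm:dp-recent} --- yields $(\max\{\varepsilon_1, k\cdot\varepsilon_2\},\delta)$-DP with respect to history. I expect the main obstacle to be the bookkeeping in the first case rather than any single inequality: one must argue rigorously that although $S_i$ influences $\mathcal{E}_i,\ldots,\mathcal{E}_{i+k-1}$ (through $M_i$ and its noisy weights there), that entire influence factors through the single invocation $\mathcal{A}(S_i)$ and is therefore post-processing; and one must confirm that the \texttt{oldest} replacement rule never lets a noisy weight affect ensemble membership, which is precisely what avoids the extra additive $\varepsilon_2$ seen in the \texttt{worst} mode.
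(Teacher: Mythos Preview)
Your proposal is correct and follows essentially the same approach as the paper: a two-case analysis on whether the single differing record lies in some training subset $S_i$ or some validation subset $V_i$, combined via the worst case (parallel composition). The paper's own proof is a one-line pointer back to Theorem~\ref{thm:dp-recent} together with the two key structural facts you isolate --- each $S_i$ feeds a single invocation of $\mathcal{A}$, and each $V_i$ is read exactly $k$ times --- which you have simply spelled out more carefully via the phase-(A)/(B)/(C) decomposition and the explicit post-processing argument.
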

\begin{proof}
The proof is basically the same as for Theorem \ref{thm:dp-recent}, noting that $X$ and $X'$ differ only in a single sample either in the training subset or in the validation subset, for one ensemble.
\end{proof}

Therefore, even if the adversary has access to all released ensembles, the privacy loss does not accumulate compared to releasing a single ensemble. This is due to the two facts. (i)  each model in an ensemble is trained on a \textit{disjoint} training subset, which ensures that accessing more models does not change the $(\varepsilon_1,\delta)$-DP (i.e., parallel composition, Theorem \ref{thm:pc}), (ii) each validation subset is used exactly \textit{$k$ times} (that is, $V_t$ is used for the $k$ models in $\mathcal{E}_t$), which ensures the $(k\cdot\varepsilon_2,0)$-DP remains unchanged.

\begin{theorem}\label{thm:dp-update-worse}
With update\_mode=``worst", Algorithm \ref{algo:ens_update} is $(\max\{\varepsilon_1, \\ (k+1) \cdot \varepsilon_2\}, \delta)$-DP with respect to history.
\end{theorem}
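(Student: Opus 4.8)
The plan is to follow the structure of the proof of Theorem~\ref{thm:dp-recent} (and Theorem~\ref{thm:dp-update-old}), making one change: in the worst-model update, Step~8 of Algorithm~\ref{algo:ens_update} invokes the Laplace mechanism $k+1$ times on $V_{t+1}$ rather than $k$ times, so the per-validation-subset privacy cost becomes $(k+1)\varepsilon_2$ instead of $k\varepsilon_2$. Fix a neighboring pair of histories $X=\langle X_1,\dots,X_t\rangle$ and $X'=\langle X_1',\dots,X_t'\rangle$. By Definition~\ref{def:DP-ensemble-history} they agree on every $X_\ell$ except one index $j$, and within $X_j$ (via Definition~\ref{def:neighbor2}) either \textbf{(I)} a single training subset $S_\ell$ differs by one sample, or \textbf{(II)} the validation subset $V_j$ differs by one sample. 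Since DP is a worst-case guarantee it suffices to bound the privacy loss of the released history $\mathcal{E}=\langle\mathcal{E}_1,\dots,\mathcal{E}_t\rangle$ separately in the two cases and take the maximum; as noted after Theorem~\ref{thm:dp-recent}, this ``max over cases'' is a tailored instantiation of parallel composition (Theorem~\ref{thm:pc}) over the disjoint training-data and validation-data portions of the global input.

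For Case~(I), the independence/disjointness assumption makes the training subsets of the distinct models pairwise disjoint, so changing $S_\ell$ affects only the single trained model $M_\ell$, through one call to the $(\varepsilon_1,\delta)$-DP mechanism $\mathcal{A}$. Everything else in the output — the other trained models, the fresh Laplace noises used for weights, hence all noisy weights, hence every $\arg\min$ that selects the dropped model, and every ensemble composition at every time — is either unchanged or a (randomized) post-processing of $\mathcal{A}(S_\ell)$ that never re-reads $S_\ell$. So by parallel composition (Theorem~\ref{thm:pc}) over the disjoint training subsets followed by post-processing (Theorem~\ref{thm:post}), the history release is $(\varepsilon_1,\delta)$-DP in this case. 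For Case~(II), the crucial observation is that $V_j$ is consulted \emph{only} at the single update step that turns $\mathcal{E}_{j-1}$ into $\mathcal{E}_j$, and there it is used at most $k+1$ times — once for each noisy weight of the $k$ incumbent models plus the newly trained model. By the relevant sensitivity lemma and Theorem~\ref{thm:lap}, each such weight computation is $(\varepsilon_2,0)$-DP, so by sequential composition (Theorem~\ref{thm:sc}) the $k+1$ computations on $V_j$ together are $((k+1)\varepsilon_2,0)$-DP. All downstream steps — the $\arg\min$ selecting the model to drop, the composition of $\mathcal{E}_j$, and its propagation into $\mathcal{E}_{j+1},\dots,\mathcal{E}_t$, whose own weights are recomputed from the untouched $V_{j+1},\dots,V_t$ — are post-processing that never re-reads $V_j$; the models are untouched because training subsets are disjoint from validation subsets. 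Hence Case~(II) yields $((k+1)\varepsilon_2,\delta)$-DP, and combining the two cases gives the claimed $(\max\{\varepsilon_1,(k+1)\varepsilon_2\},\delta)$-DP.

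I expect the main obstacle to be not any single inequality but the bookkeeping behind the ``$V_j$ is read at most $k+1$ times over the whole history'' claim: one must verify that no later update step ever revisits an earlier validation subset, and that a model, although it appears in several consecutive ensembles, is still produced by a single invocation of $\mathcal{A}$ — so that every cross-time carry-over of a model or a weight is genuinely post-processing and not a new access to the underlying sensitive data. Once that is pinned down, the rest is a direct reuse of the composition and post-processing arguments already used for Theorems~\ref{thm:dp-recent} and~\ref{thm:dp-update-old}.
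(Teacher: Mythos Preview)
Your proposal is correct and follows essentially the same approach as the paper: the paper's own proof is a one-sentence pointer back to Theorem~\ref{thm:dp-update-old}, noting only that in the ``worst'' mode each validation subset $V_{t+1}$ is consulted $k+1$ times (for the $k$ incumbents plus the new model), which turns the $k\varepsilon_2$ term into $(k+1)\varepsilon_2$. Your write-up simply makes explicit the case split, the parallel/sequential composition, and the post-processing bookkeeping that the paper leaves implicit; the only cosmetic slip is that Case~(II) actually gives $((k+1)\varepsilon_2,0)$-DP rather than $((k+1)\varepsilon_2,\delta)$-DP, which of course does not affect the final bound.
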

\begin{proof}
The proof follows the same idea as Theorem \ref{thm:dp-update-old}, but for the case of replacing the worst model, we have to calculate the weights for all $k$ models already in the ensemble \emph{plus} the additional new model in order to find the worst model, so each validation subset is used $k+1$ times. Therefore, now we have the overall privacy guarantee of $( \max\{\varepsilon_1,(k+1) \cdot \varepsilon_2\}, \delta)$-DP.
\end{proof}

\textbf{Discussion.} Therefore, replacing worst model incurs a \textit{slightly} larger privacy loss, compared to  replacing oldest model. Importantly, in both cases the privacy loss depends on the size of an ensemble, $k$, but not on the number of ensembles released. This property is essential for practical use because the number of ensemble updates is potentially unbounded for data streams. To optimize the given privacy budget $(\varepsilon,\delta)$, 
we can set $\varepsilon_1 = \varepsilon$ and $\varepsilon_2 = \nicefrac{\varepsilon}{k}$ when replacing oldest model 
(Theorem \ref{thm:dp-update-old}), and set $\varepsilon_1 = \varepsilon$ and $\varepsilon_2 = \nicefrac{\varepsilon}{k+1}$ when replacing worst model (Theorem \ref{thm:dp-update-worse}).

\begin{table}[h]
\begin{tabular}{llllll}
\hline
Dataset        & Attr. & Obs. & C/R & Prop. & Type  \\ \hline \hline
Hyperplane     & 20         & Variable           & C & 50\% & Synthetic       \\
EMBER-B          & 2381       & 2,100,000       & C & 50\% & Real       \\
EMBER-U          & 2381       & 1,365,000       & C & 30\% & Real      \\
Housing Market & 292        & 30,473        & R  & NA & Real        \\ \hline
\end{tabular}
\caption{C/R for classification/regression and Prop. for the proportion of positive class.}\label{tab:datasets}
\vspace{-0.8cm}
\end{table}

\section{Evaluation}\label{sec:DP-transfer}
This section evaluates the proposed DP temporal ensemble method. We train each $M_i$ as a neural network
using DPSGD \cite{abadi2016deep} with privacy budget $(\varepsilon_1,\delta)$. In each iteration, DPSGD 
adds the Gaussian noise $\mathcal{N}(0,\sigma^2C^2)$ to the clipped gradient $\frac{{g(x_i)}}{{\text{max}(1,\nicefrac{||g(x_i)||_2}{C})}}$
where $C$ is the clipping factor. 
For a large number of model parameters, $||g(x_i)||_2$ is large, leading to a noisy gradient. This effect is compounded for typically small data chunk sizes
where the sampling ratio for a fixed minibatch size 
becomes relatively large, which increases $\sigma$. To reduce the norm $||g(x_i)||_2$, we also consider the option of  \textit{transfer learning} for training $M_i$: first, we pre-train a model using a public dataset $P$ without privacy concerns (for example, obsolete dataset, anonymized dataset, dataset obtained with data owners' explicit consent, or dataset from related but public domain) and then, we train only the last few layers using the sensitive $D_i$ via DPSGD keeping the parameters for other layers unchanged. If no such public $P$ is available, $M_i$ will be fully trained using the sensitive $D_i$ via DPSGD.

\subsection{Data and Model Details}\label{sec:datasets}
Table \ref{tab:datasets} shows the data summary. 

\subsubsection{Hyperplane} Hyperplane is a \textit{synthetic} dataset used extensively in the concept drift literature \cite{hulten2001mining,wang2003mining} to classify points separated by a hyperplane. We simulate time-evolving concepts by changing the orientation and the position of the hyperplane in a smooth manner. 
As in \cite{gozuaccik2019unsupervised}, we use the \verb|HyperplaneGenerator()| 
function from \cite{10.5555/3291125.3309634}
to create the simulated points, and use four parameters (\verb|n_drift_features|, \verb|mag_change| , \verb|noise_percentage|, and \verb|sigma_percentage|) to  generate four drift types: \emph{gradual} drift (concept changes slowly over time)\footnote{Parameter values: <$10, 0.1, 0.05, 0.1$>}, \emph{rapid} drift (change happens at a rapid pace)\footnote{Parameter values: <$20, 0.4, 0.1, 0.4$>}, \emph{recurrent} drift (concepts reappear at future times, every fifth time for our case)\footnote{Same parameters as in rapid drift, use restart() argument every fifth time}, and \emph{abrupt} drift (concept changes suddenly at a time instance, every fifth time for our case)\footnote{Parameter values: <$0,0,0,0$>, switch labels every fifth time}. We evaluate using the classification Accuracy for the general setting. For all drift types, we generate a total of 20 chunks $D_i$s with the default size of 1000, and use a fully connected neural network with two hidden layers of sizes 20 and 10 respectively with ReLU as the activation function for the hidden layers and softmax for the output layer. The default drift type is rapid. We do not use any public data or transfer learning for this dataset.


\subsubsection{EMBER-(B \& U)}\label{sec:res_ember} EMBER  \cite{2018arXiv180404637A} contains features for
Windows executable files for the years of 2017 and 2018 with the goal to classify malicious vs benign files, and the dataset has a natural concept drift \cite{yang2021bodmas}. We remove unlabelled observations. \textbf{EMBER-B} is the original class-balanced version and \textbf{EMBER-U} is obtained by under-sampling the positive class to  30\%.
For EMBER-B, we evaluate using classification Accuracy,  and for EMBER-U we evaluate using Balanced Accuracy (BA) with $a_1=0.7$ and $a_2=0.3$ (Eqn. \eqref{eqn:BA}). The data chunks $D_i$s are created as bi-weekly observations, leading to an average chunk size of 33,333 for EMBER-B and 21,666 for EMBER-U. A fully connected neural network is selected via hyperparameter search \footnote{Four hidden layers (1400,2000,1100,250,2), ReLU for hidden, softmax for output}.
For transfer learning, we use the first six months of 2017 as the public data $P$, leave the last six months of 2017 as the \emph{time buffer}, and retrain the last two layers of the pre-trained model (preserving the layer sizes) using $D_i$s for 2018.

\subsubsection{Housing Market}\label{sec:res_house} Housing market \cite{sberbankdata}  
contains the property information from August 2011 to June 2015, with the goal of predicting the continuous property price (i.e., regression). We evaluate using 1-MSE where MSE is defined by Eqn. \eqref{eqn:mse_reg}.
We normalize the property price to within [0,1]. A fully connected neural network is selected via hyperparameter search\footnote{Five hidden layers (500,350,250,150,50,1), ReLU for hidden, sigmoid for output}. For transfer learning, we use the data from 2011 as public data $P$ to pre-train a model, leave out the data from 2012 as the time buffer, and use the months starting from January 2013 as our monthly data chunks $D_i$s, leading to an average chunk size of 859. $M_i$ is obtained by retraining the last two layers of the pre-trained model (preserving the sizes) using $D_i$. 

For \textit{all} datasets: we standardize continuous features using StandardScaler from scikit-learn\cite{scikit-learn} and use the one-hot encoding for categorical features. We run DPSGD with 30 epochs with the minibatch size of 100. The labeled data $D_i$ is split into train-validation-test using 70\%-20\%-10\% and we use the training subset for training the model, validation subset for weight estimation, and the test subset to report the performance. We report the average result of 10 runs with standard errors. 

\begin{figure*}[t]
 \centering
         \begin{subfigure}{.25\textwidth}
     \includegraphics[scale=0.3]{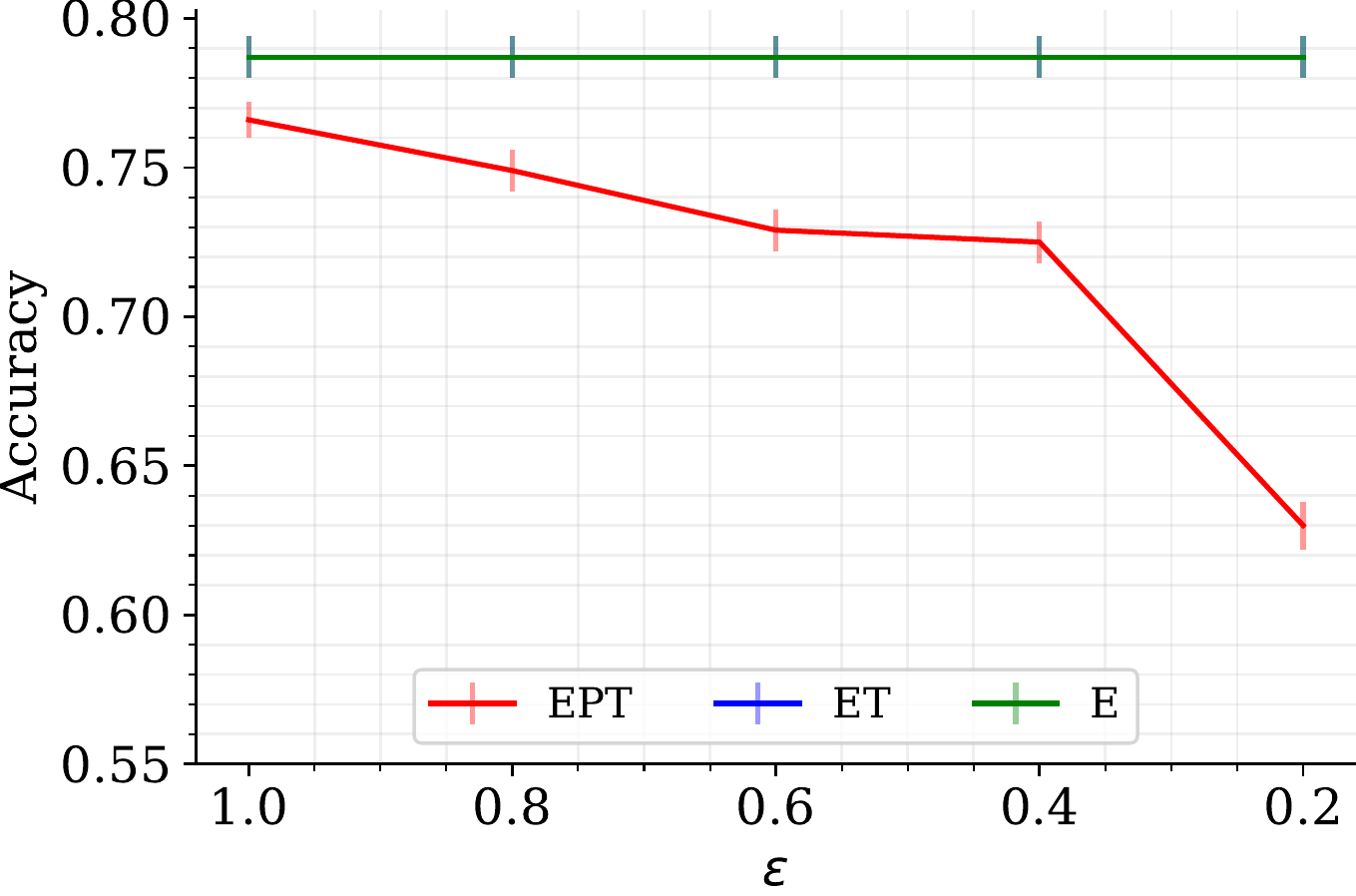}
        \caption{Hyperplane }
    \end{subfigure}%
    \begin{subfigure}{.25\textwidth}
     \includegraphics[scale=0.3]{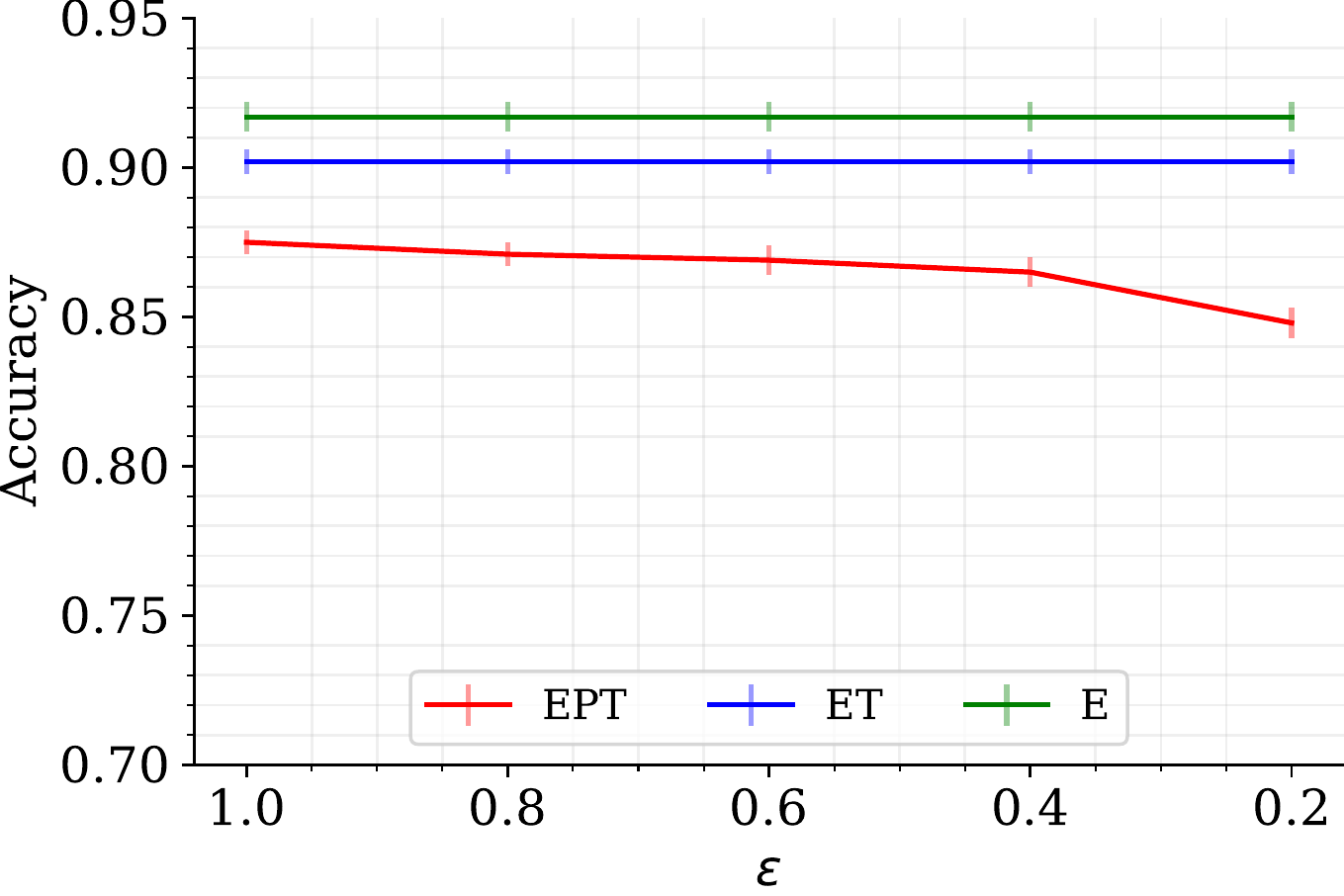}
        \caption{EMBER-B}
    \end{subfigure}%
        \begin{subfigure}{.25\textwidth}
     \includegraphics[scale=0.3]{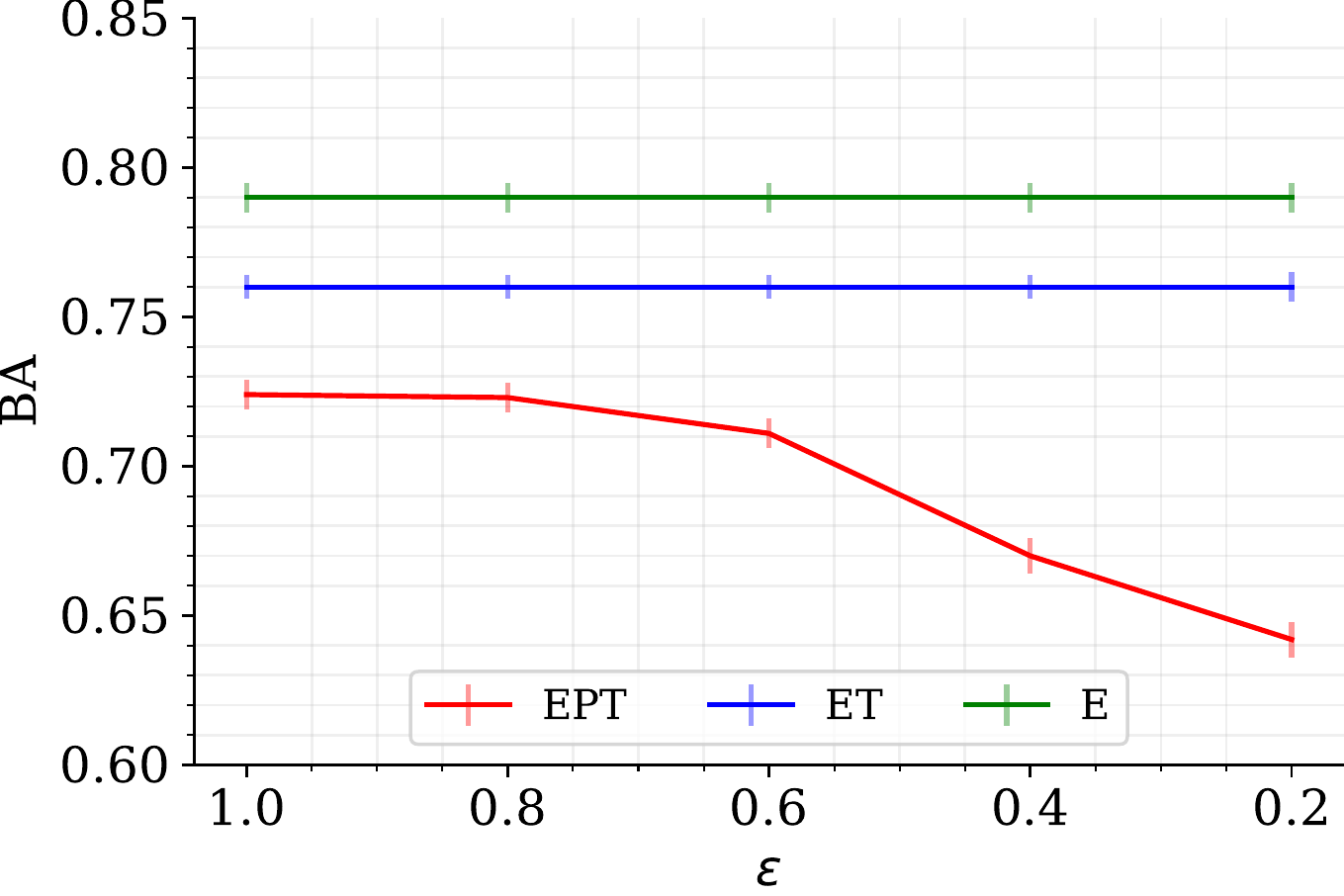}
        \caption{EMBER-U}
    \end{subfigure}%
        \begin{subfigure}{.25\textwidth}
     \includegraphics[scale=0.3]{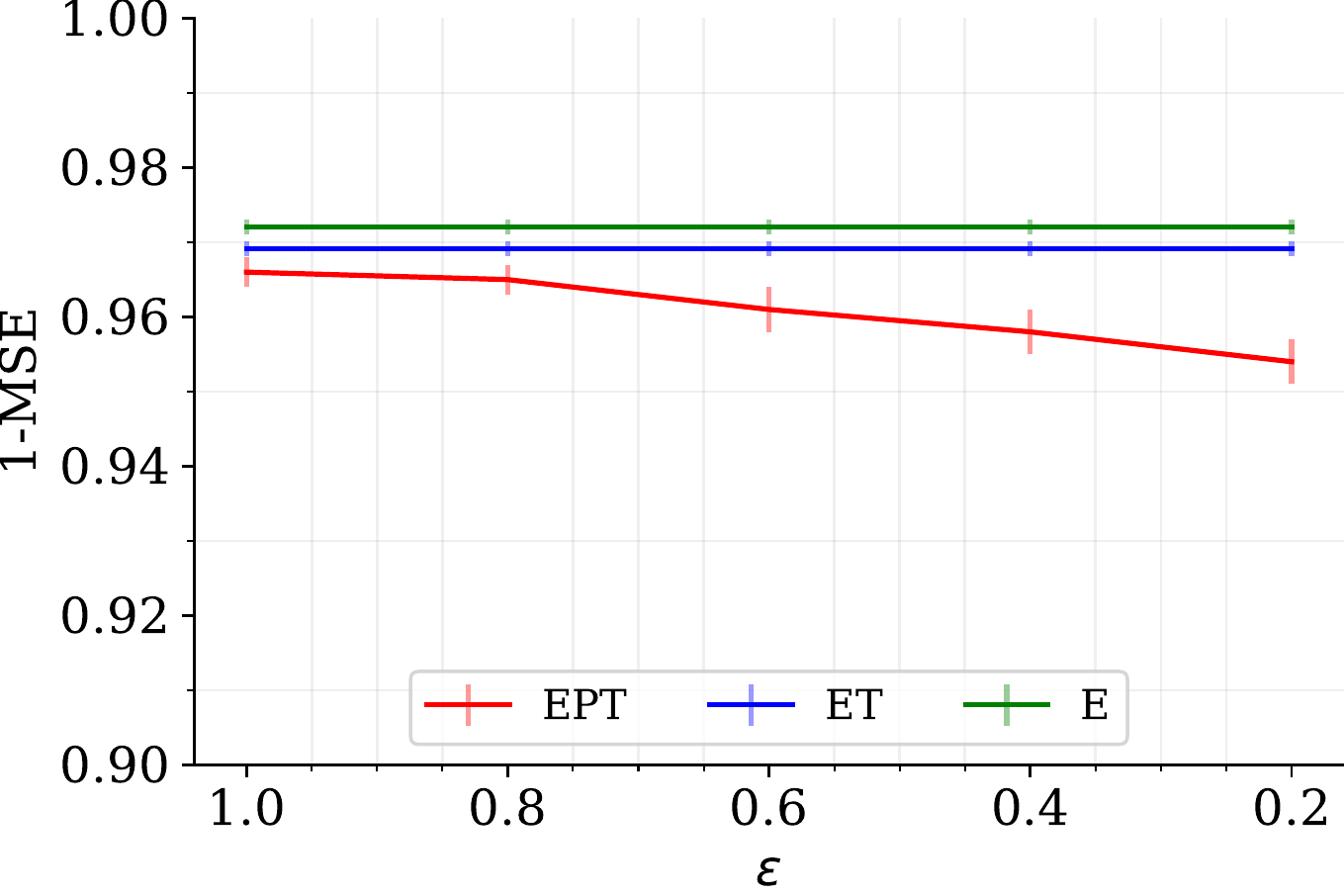}
        \caption{Housing Market }
    \end{subfigure}%
    \vspace{-0.2cm}
    \caption{\textit{\textbf{Impact of privacy}} (Comparing EPT with ET and E). The vertical bars represent the standard errors. For Hyperplane, there is no transfer learning, hence, EPT is same as EP and ET is same as E.
    }\label{fig:main_res}
\end{figure*}

\begin{figure*}[]
 \centering
     \begin{subfigure}{.25\textwidth}
     \includegraphics[scale=0.31]{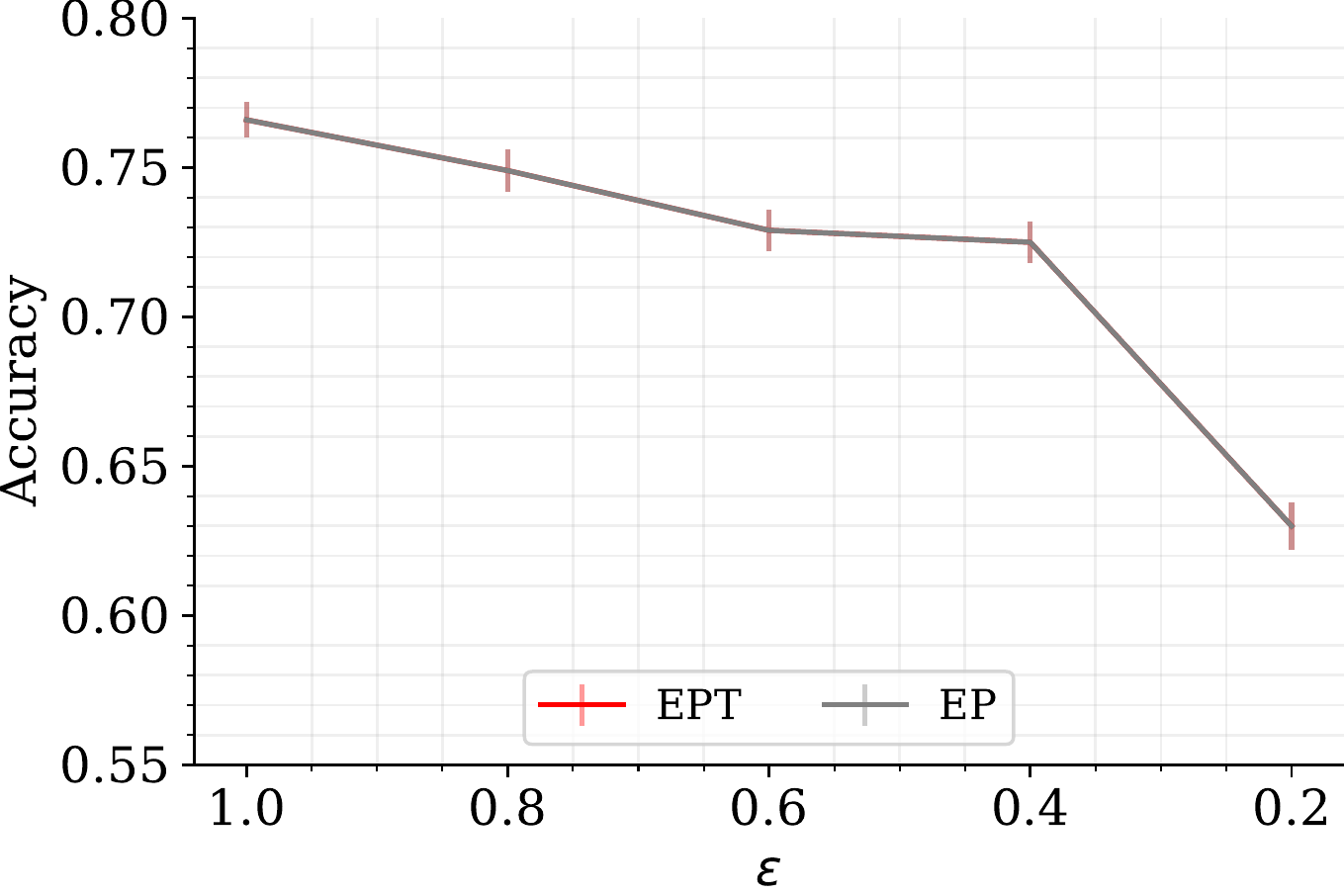}
        \caption{Hyperplane }
    \end{subfigure}%
    \begin{subfigure}{.25\textwidth}
     \includegraphics[scale=0.31]{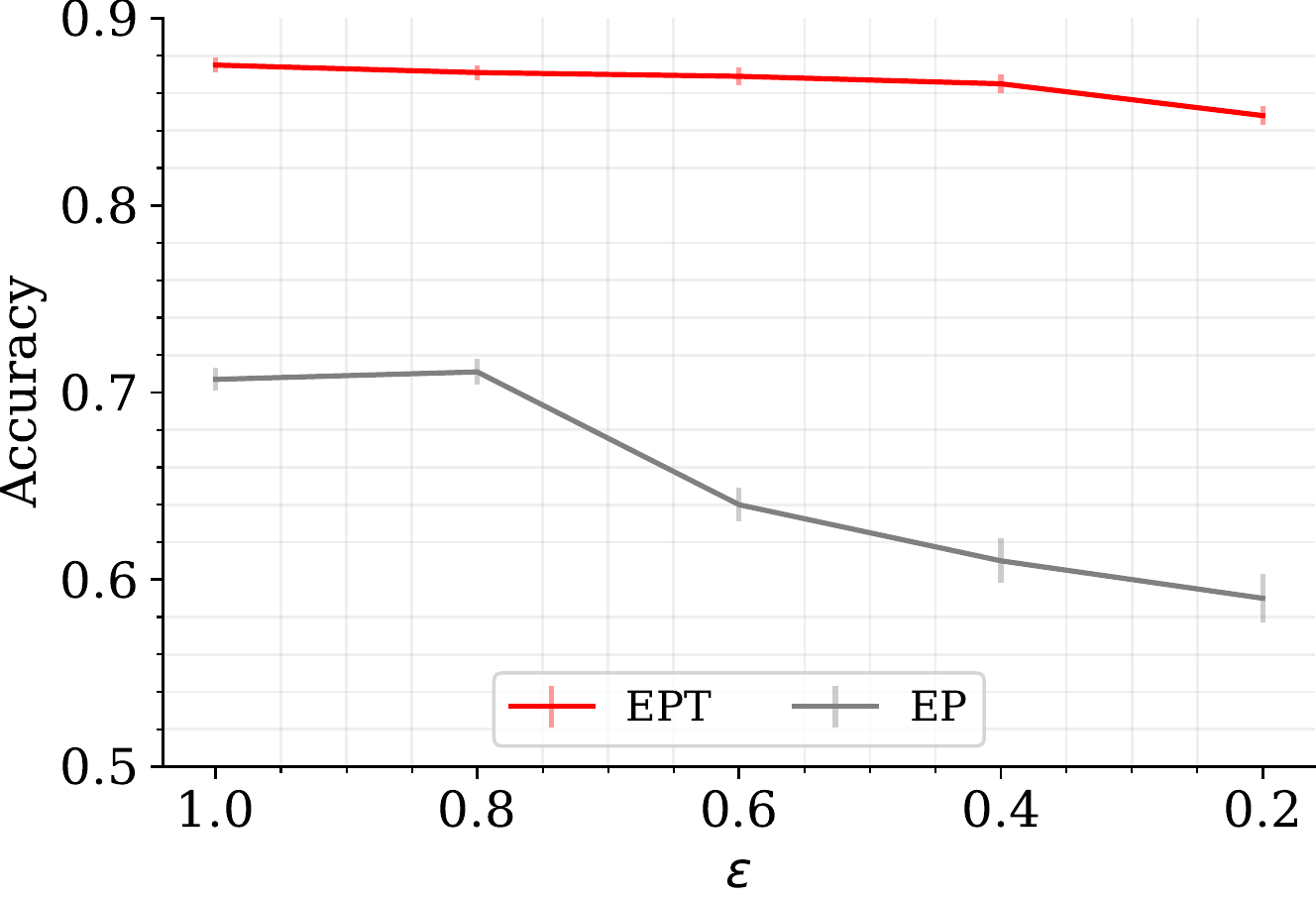}
        \caption{EMBER-B }
    \end{subfigure}%
         \begin{subfigure}{.25\textwidth}
      \includegraphics[scale=0.31]{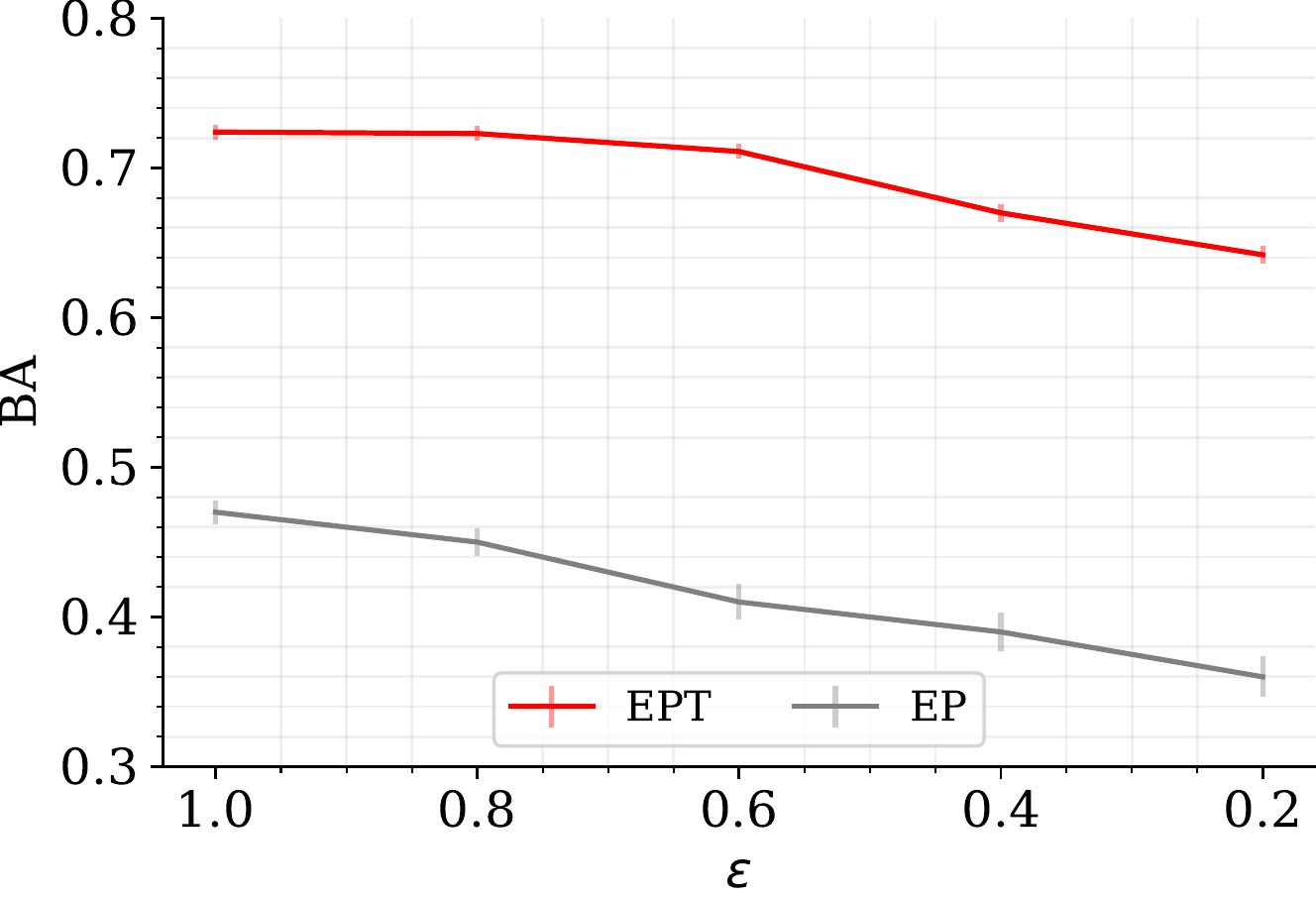}
         \caption{EMBER-U }
     \end{subfigure}%
        \begin{subfigure}{.25\textwidth}
     \includegraphics[scale=0.31]{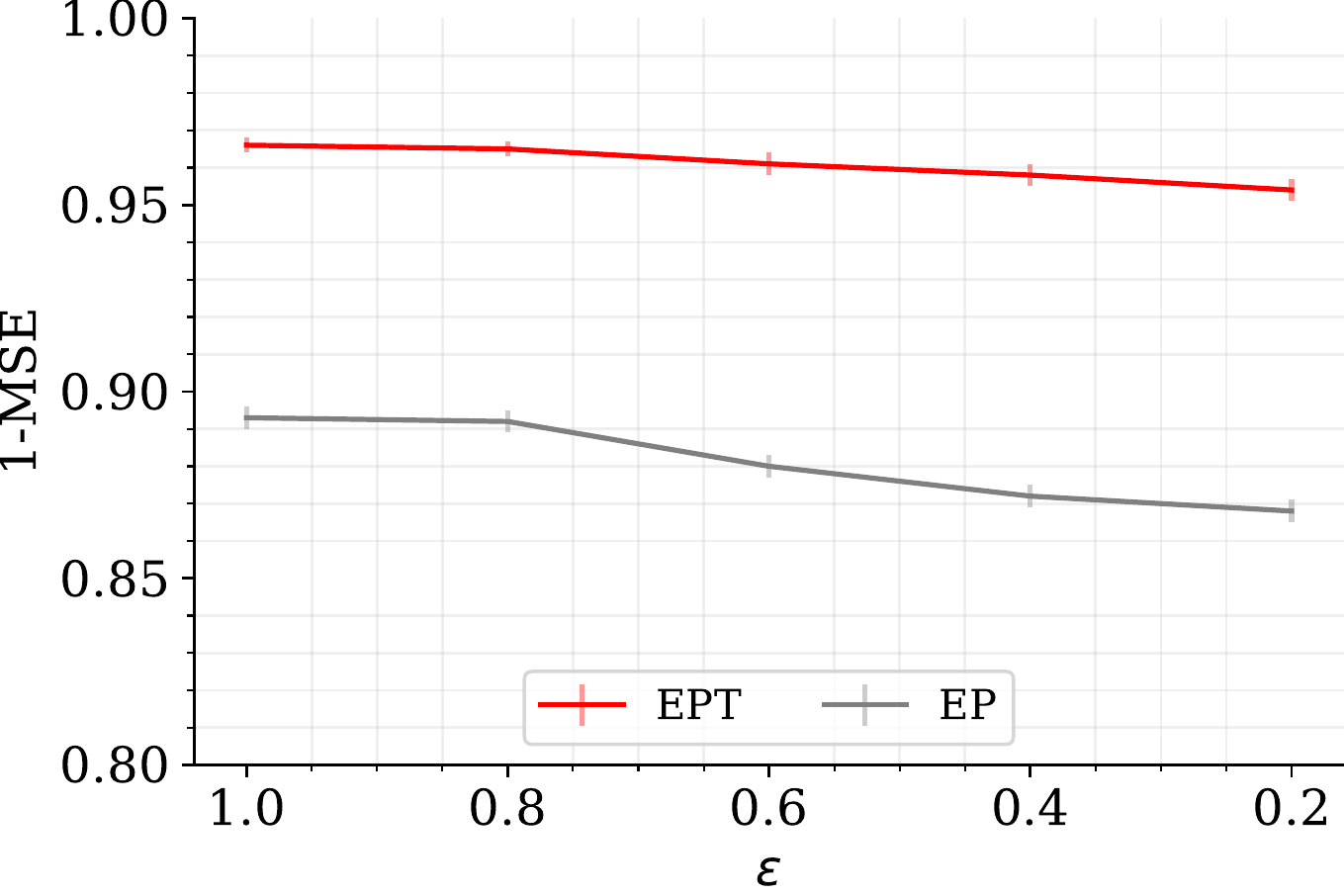}
        \caption{Housing Market }
    \end{subfigure}%
\vspace{-0.2cm}
    \caption{\textit{\textbf{Impact of transfer learning}} (Comparing EPT vs EP). EPT and EP are identical for Hyperplane.
    }\label{fig:ab_trans}
    \vspace{-0.2cm}
\end{figure*}

\begin{table}[h]
\begin{tabular}{lllll}
\hline
  Method      & Private & Ensemble & Transfer & Data\\ \hline \hline
EPT     & \cmark        & \cmark         & \cmark         & [$D_{\tau(1)},\cdots,D_{\tau(k)}$]      \\
EP      & \cmark        &  \cmark        & \xmark          & [$D_{\tau(1)},\cdots,D_{\tau(k)}$]      \\
PT($1$) & \cmark        & \xmark          & \cmark          & [$D_{\tau(1)}$]      \\
PT($k$) & \cmark        & \xmark         & \cmark          & [$D_{\tau(1)} \cup \cdots \cup D_{\tau(k)}$]     \\
ET      & \xmark        & \cmark         & \cmark          & [$D_{\tau(1)},\cdots,D_{\tau(k)}$]     \\
E       & \xmark        & \cmark         & \xmark          &  [$D_{\tau(1)},\cdots,D_{\tau(k)}$]    \\ \hline
\end{tabular}\caption{Competitor methods. \cmark ~ signifies the presence of a trait whereas \xmark ~  signifies its absence. Data is the data for training an ensemble or training a model for non-ensembles.
}\label{tab:methods}
\vspace{-0.7cm}
\end{table}

\subsection{Competitor Methods}\label{sec:baselines}
As discussed in Section \ref{sec:related}, existing works on data streams either deal with summary statistics or do not consider privacy, or cannot deal with an unbounded number of updates. Table \ref{tab:methods} lists the methods evaluated and their characteristics. We use the following naming convention:  \textbf{``E"} denotes ensemble classifiers, \textbf{``P"} 
denotes DP, and \textbf{``T"} denotes transfer learning. 

\textbf{EPT} is the DP temporal ensemble proposed in Section \ref{sec:dp_ens} consisting of $k$ models ($M_i$s) trained on the $k$ data chunks $D_{\tau(1)},\cdots,D_{\tau(k)}$ with  transfer learning, whereas \textbf{EP} does not use transfer learning. 
\textbf{ET} and \textbf{E} are the non-private versions of EPT and EP and they serve as an upper bound for the performance of EPT and EP. We also compare our methods with two \emph{non-ensemble} solutions, \textbf{PT($1$)} and \textbf{PT($k$)}, where a \textit{single} model is used for prediction. PT($k$) uses the union $D_{\tau(1)} \cup \cdots \cup D_{\tau(k)}$ of $k$ chunks  
to train the model and advances to the \textit{next non-overlapping} window covering times $t+1,\cdots,t+k$, and PT($1$) is the special case of $k=1$, i.e., building a new model using each new chunk. With a single model trained using non-overlapping chunks, 
these methods do not need weight estimation and will spend the whole privacy budget on training the model. For prediction, EPT, EP, ET, E, and PT($1$) are used to predict in the \textit{next time} (i.e., $t+1$) whereas PT($k$) predicts in its next window (i.e., $t+1,\cdots,t+k$).

All DP methods are evaluated under the same privacy budget ($\varepsilon,\delta$). The following default settings  are used: ($\varepsilon=1$, $\delta=0.0001$),  window size $k=5$, drift type = ``rapid", and update\_mode = ``oldest". 
We set $\varepsilon_1 = \varepsilon$ and $\varepsilon_2 = \nicefrac{\varepsilon}{k}$ for update\_mode = ``oldest", and set $\varepsilon_1 = \varepsilon$ and $\varepsilon_2 = \nicefrac{\varepsilon}{k+1}$ for update\_mode = ``worst".  
We begin training for \emph{all} methods once we have the \emph{first} $k$ data chunks. This delay is only for evaluation purposes. 

Section \ref{sec:first_comp} studies the utility loss of our DP method compared to  \emph{non-private} counterparts, followed by the ablation studies evaluating the impact of transfer learning (Section \ref{sec:transfer}), ensemble (Section \ref{sec:ensemble_k}), drift dynamics and chunk sizes (Section \ref{sec:ablation_drift}), and update mode (Section \ref{sec:newvsold}).

\subsection{Impact of Privacy Preservation}\label{sec:first_comp}
The first question is how privacy preservation impacts the performance. To answer this question, we compare the performance of EPT with the non-private counterparts ET and E in Figure \ref{fig:main_res}. The main finding is that  EPT provides \emph{close} utility (average difference of $< 3\%$) to ET for $\varepsilon=1$. 
The utility gap increases as $\varepsilon$ decreases, with the average drop of 9\%  at $\varepsilon =0.2$. Our privacy settings are much tighter than those in the DPNN literature, for example, the minimum and maximum values of $\varepsilon$ are 2 and 100 according to the survey \cite{jayaraman2019evaluating}.
Comparing the non-private models E and ET, transfer learning does not help. However, as we will show later, transfer learning significantly boosts the utility in the case of private models. 

\begin{figure*}[]
 \centering
         \begin{subfigure}{.25\textwidth}
     \includegraphics[scale=0.31]{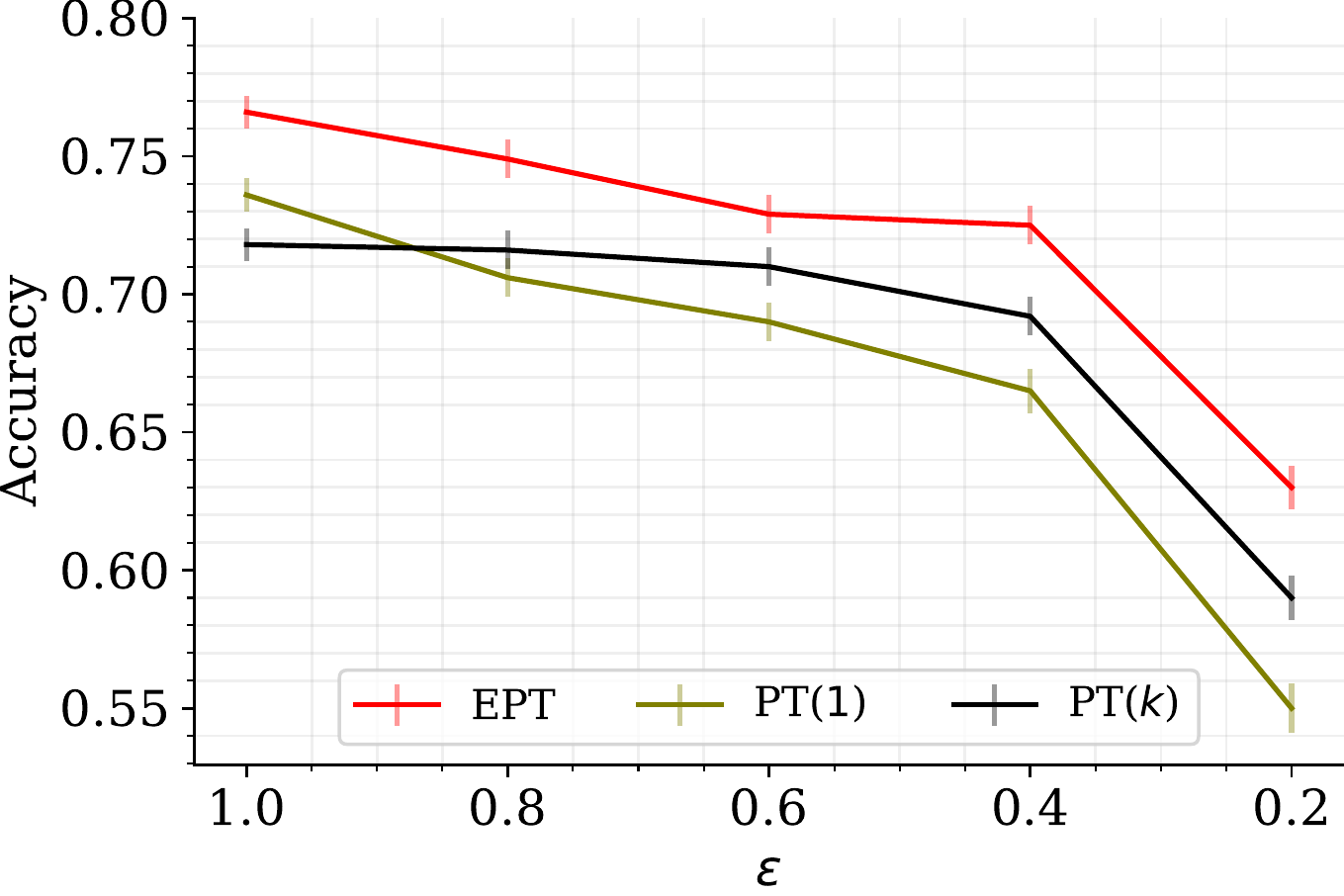}
        \caption{Hyperplane}
    \end{subfigure}%
    \begin{subfigure}{.25\textwidth}
     \includegraphics[scale=0.31]{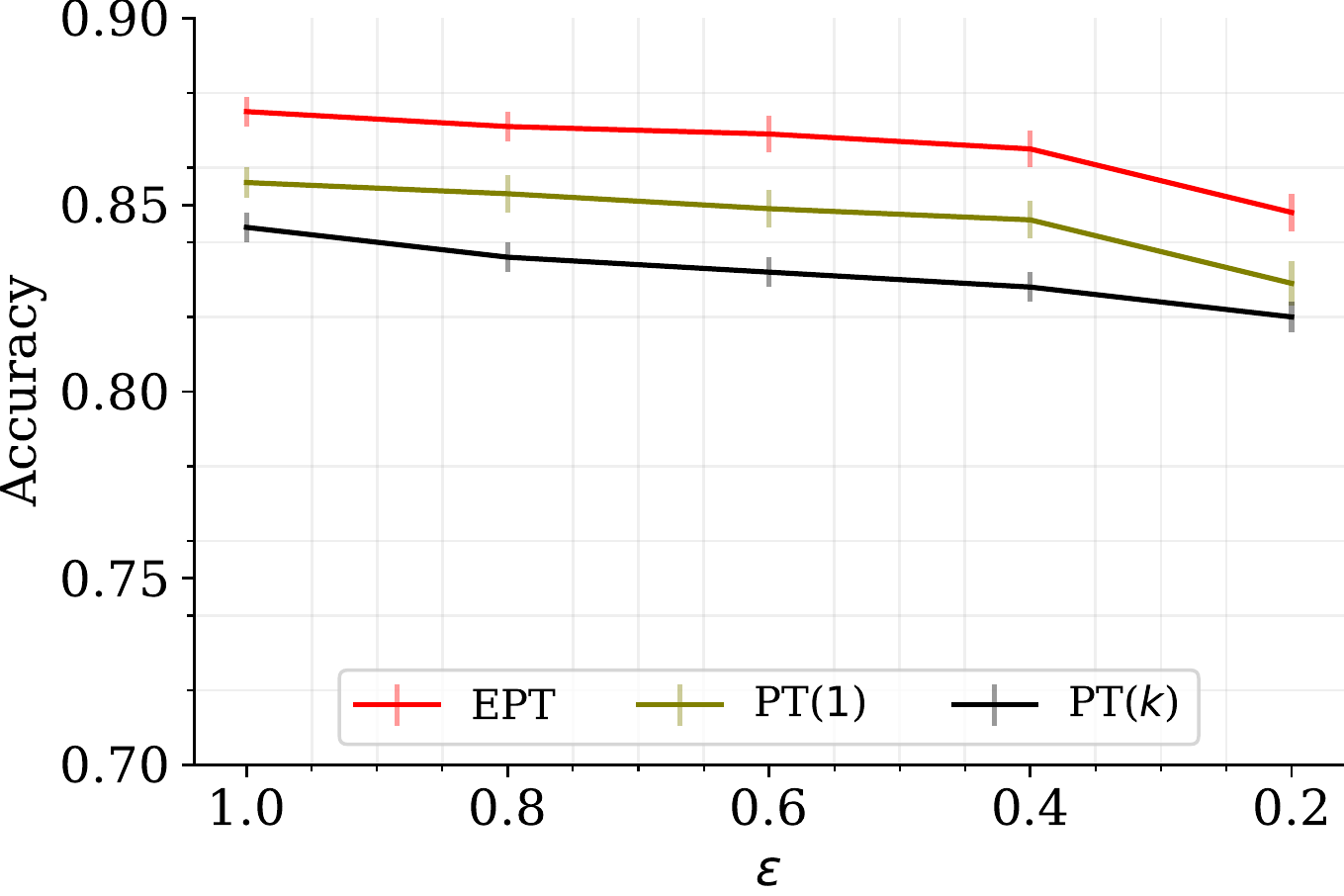}
        \caption{EMBER-B}
    \end{subfigure}%
      \begin{subfigure}{.25\textwidth}
     \includegraphics[scale=0.31]{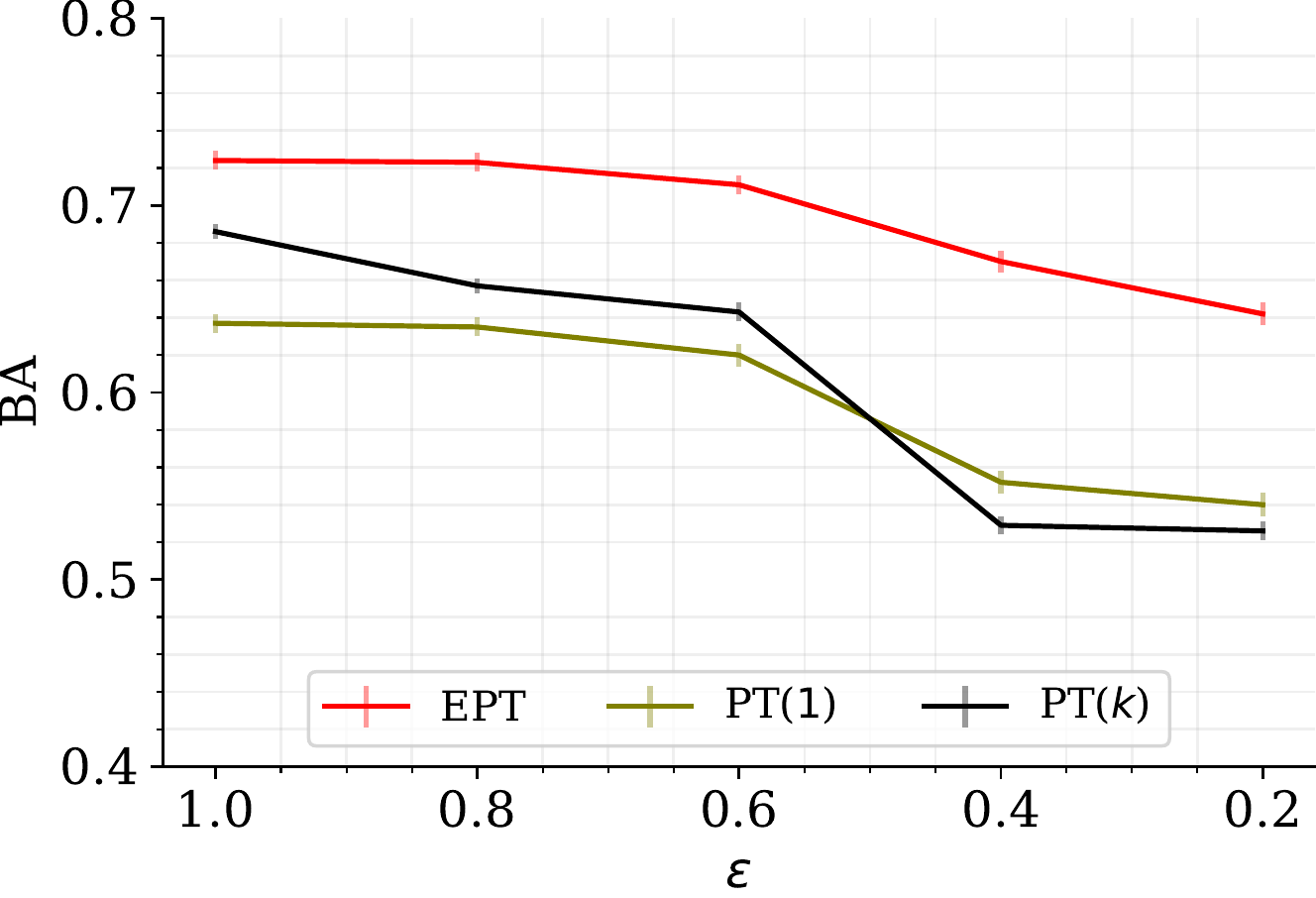}
        \caption{EMBER-U}
    \end{subfigure}%
    \begin{subfigure}{.25\textwidth}
     \includegraphics[scale=0.31]{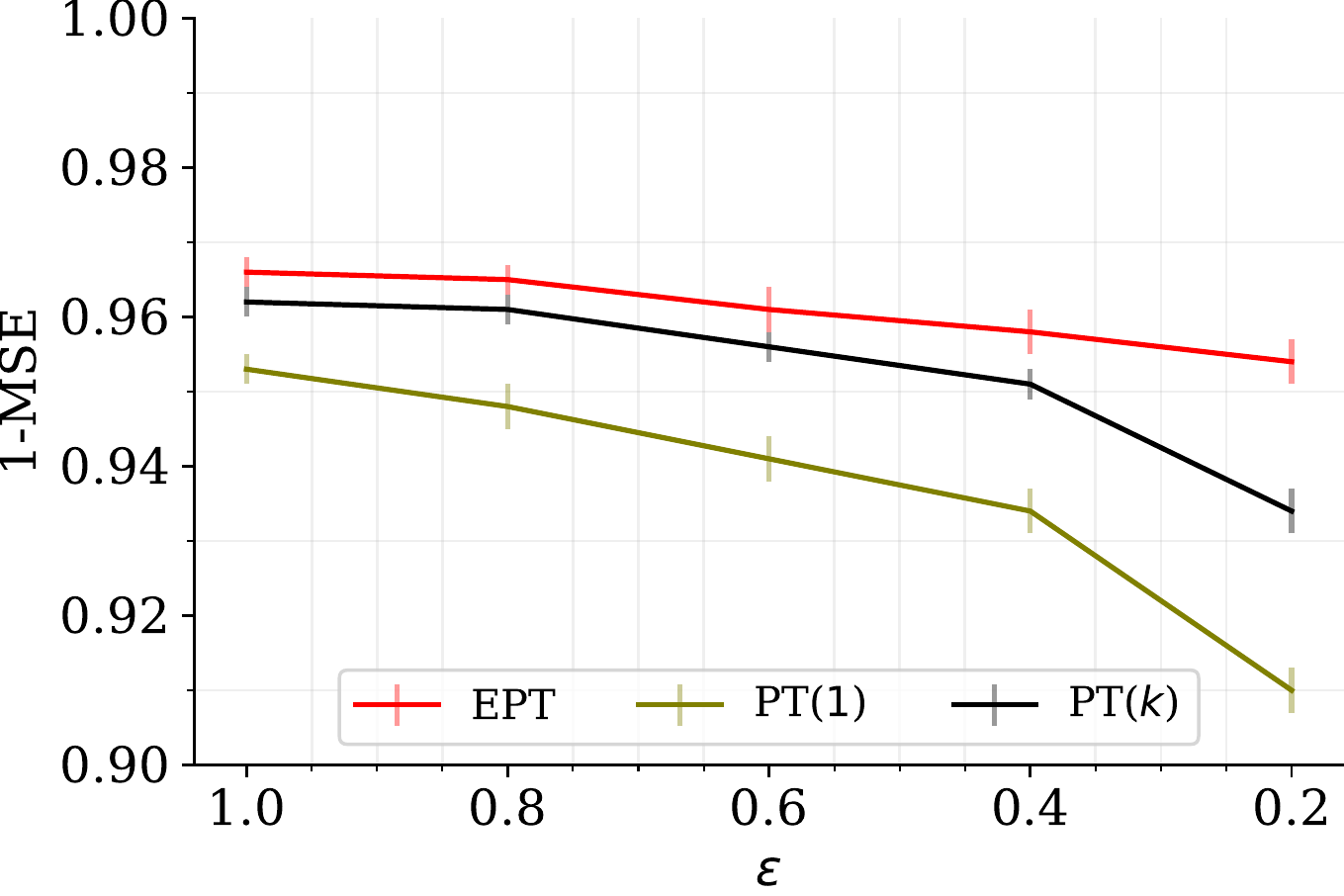}
        \caption{Housing Market}
        \end{subfigure}
    \medskip\\
            \begin{subfigure}{.25\textwidth}
     \includegraphics[scale=0.31]{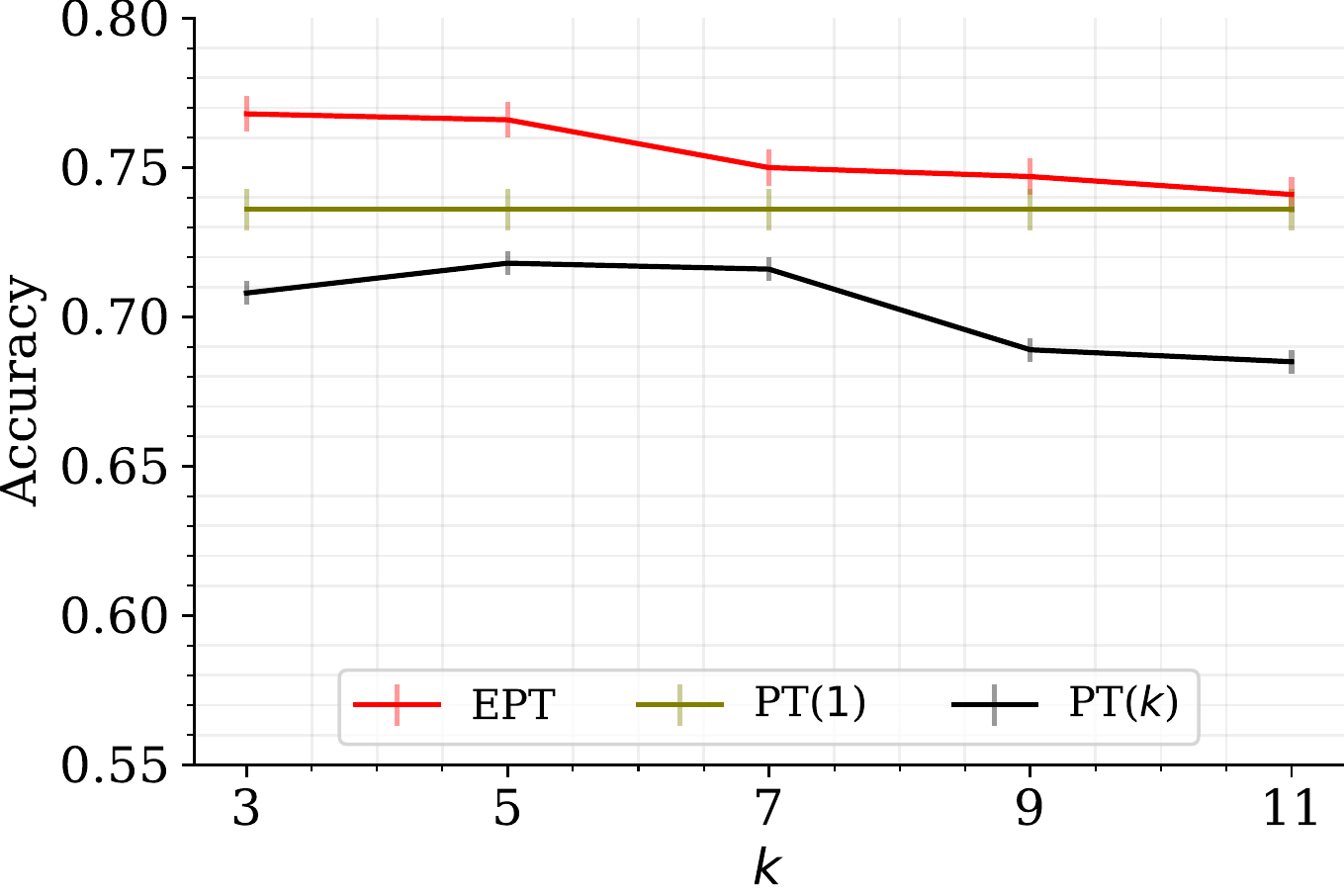}
        \caption{Hyperplane}
    \end{subfigure}%
        \begin{subfigure}{.25\textwidth}
     \includegraphics[scale=0.31]{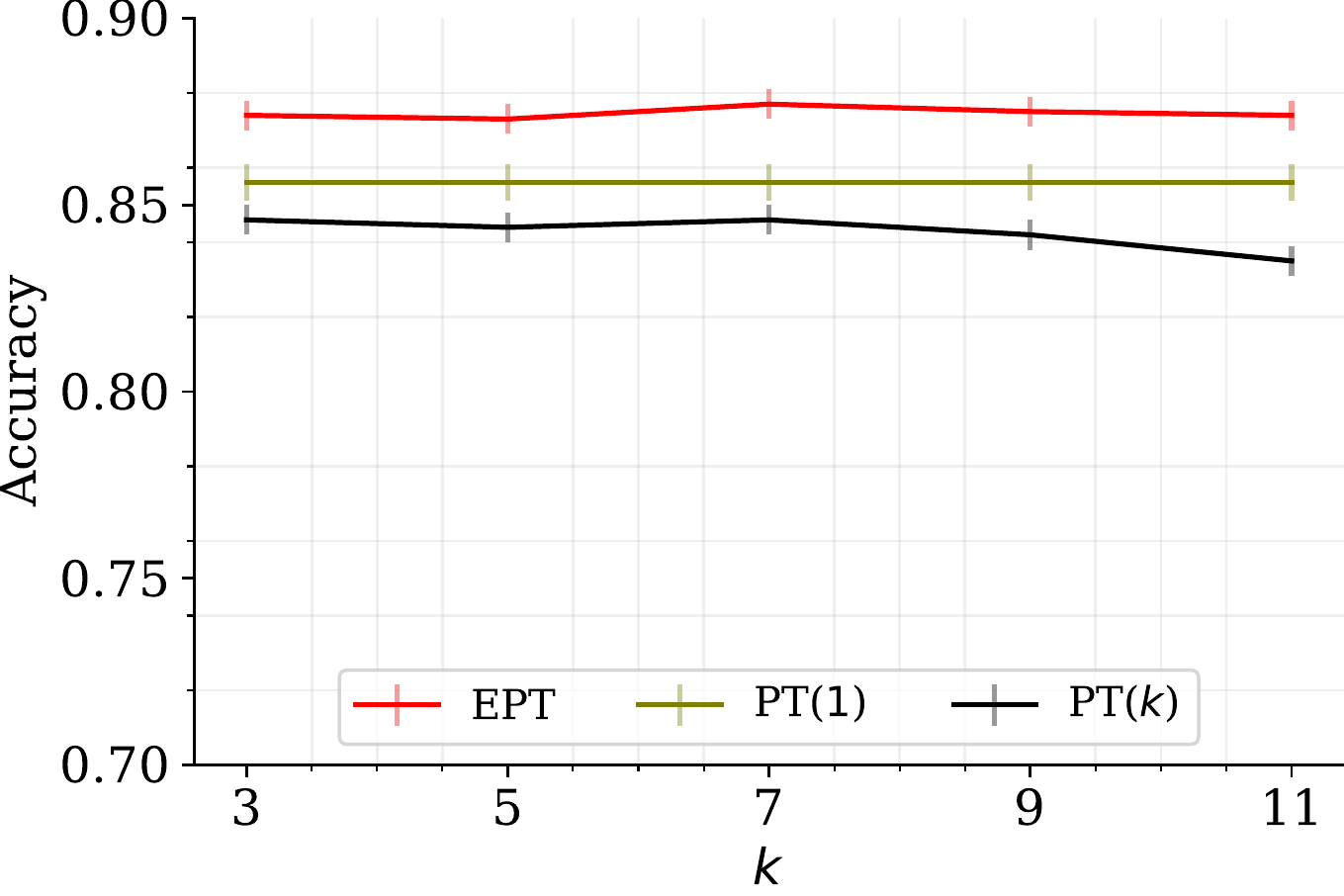}
        \caption{EMBER-B}
    \end{subfigure}%
      \begin{subfigure}{.25\textwidth}
     \includegraphics[scale=0.31]{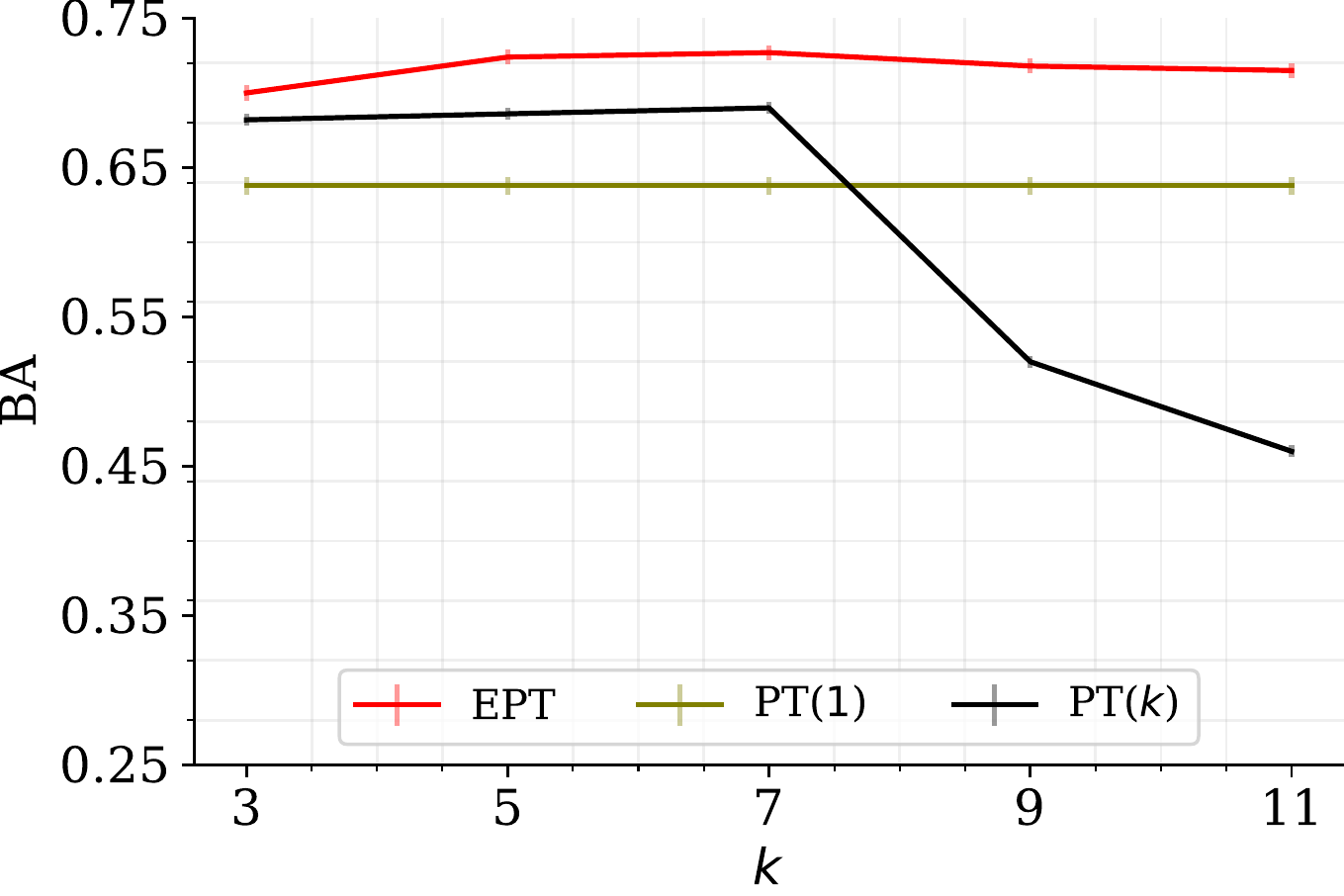}
        \caption{EMBER-U}
    \end{subfigure}%
    \begin{subfigure}{.25\textwidth}
     \includegraphics[scale=0.31]{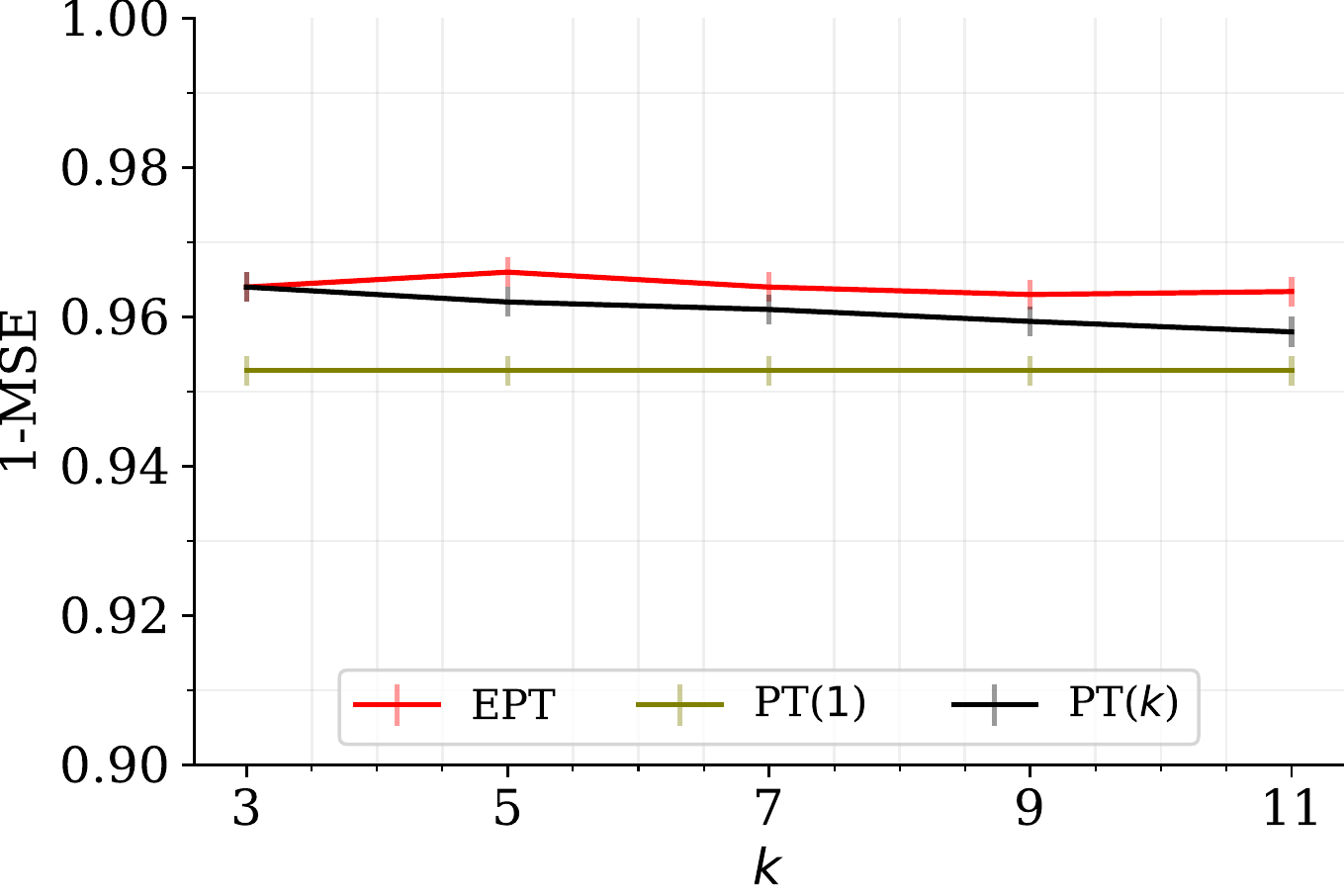}
        \caption{Housing Market}
    \end{subfigure}%
    \vspace{-0.2cm}
    \caption{\textit{\textbf{Impact of ensemble}} (Comparing EPT vs PT($1$) and PT($k$)).
First row shows the comparison with varying $\varepsilon$ while the second row shows the comparison with varying $k$. }\label{fig:ab_ens}
\vspace{-0.2cm}
\end{figure*}

\begin{figure*}[]
 \centering
    \begin{subfigure}{.25\textwidth}
     \includegraphics[scale=0.31]{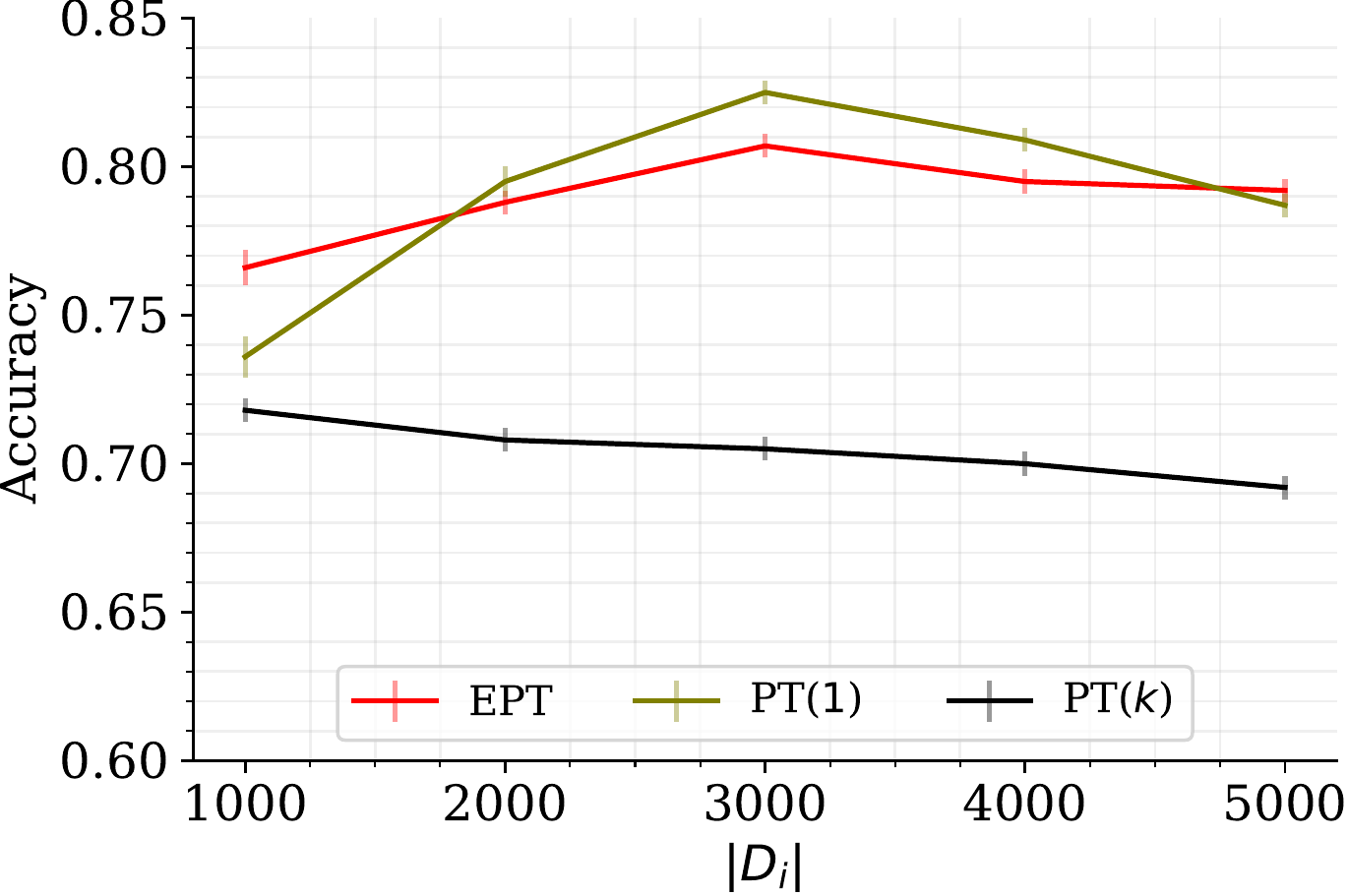}
        \caption{Hyperplane (Rapid)}
    \end{subfigure}%
      \begin{subfigure}{.25\textwidth}
     \includegraphics[scale=0.31]{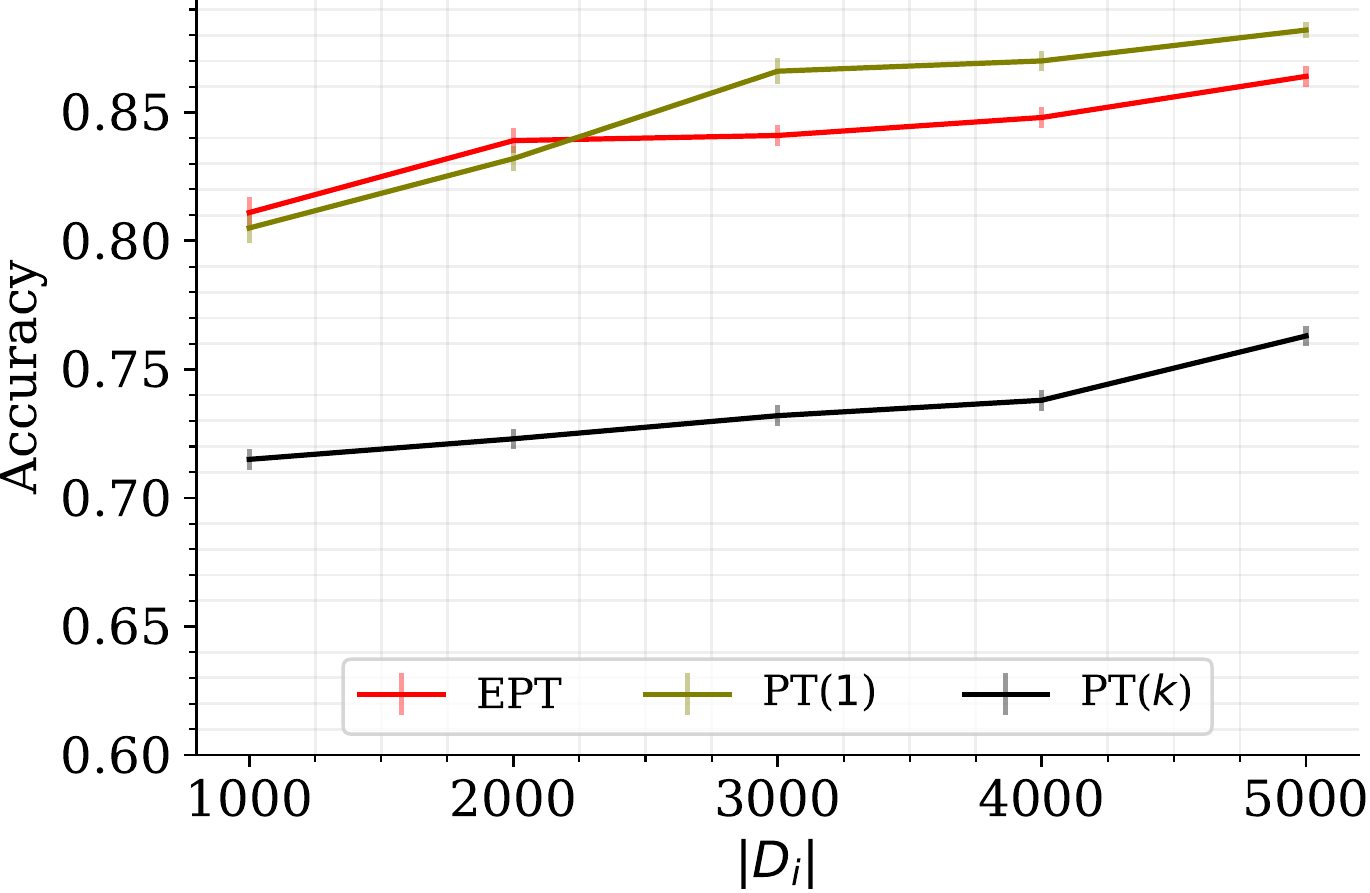}
        \caption{Hyperplane (Gradual)}
    \end{subfigure}%
        \begin{subfigure}{.25\textwidth}
     \includegraphics[scale=0.31]{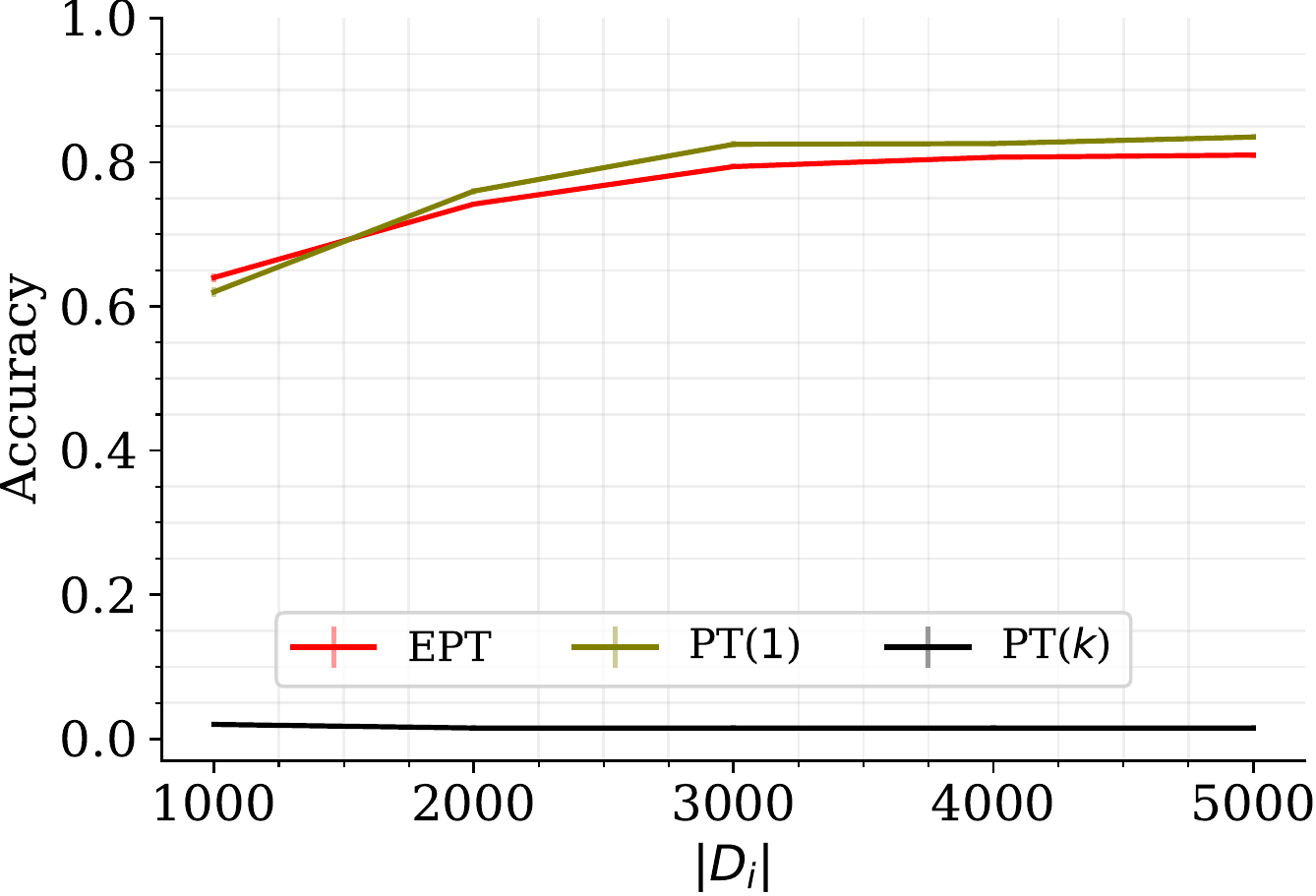}
        \caption{Hyperplane (Abrupt)}
    \end{subfigure}%
    \begin{subfigure}{.25\textwidth}
     \includegraphics[scale=0.31]{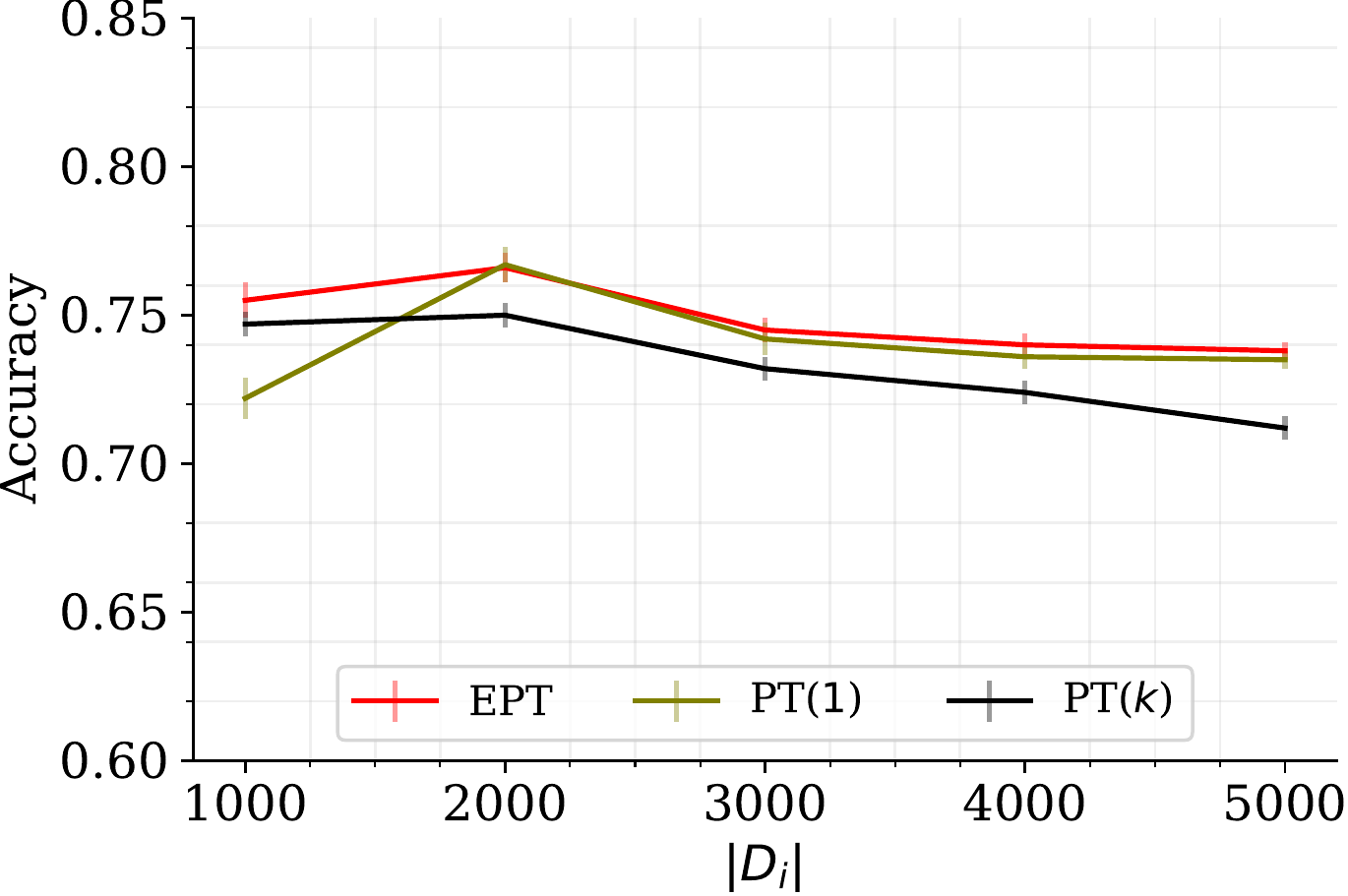}
        \caption{Hyperplane (Recurring)}
    \end{subfigure}%
    \vspace{-0.2cm}
    \caption{\textit{\textbf{Impact of drifts}} (Comparing  ensemble approaches for various drift types and chunk sizes.)}\label{fig:drift}
    \vspace{-0.2cm}
\end{figure*}

\begin{figure*}[]
 \centering
             \begin{subfigure}{.25\textwidth}
     \includegraphics[scale=0.31]{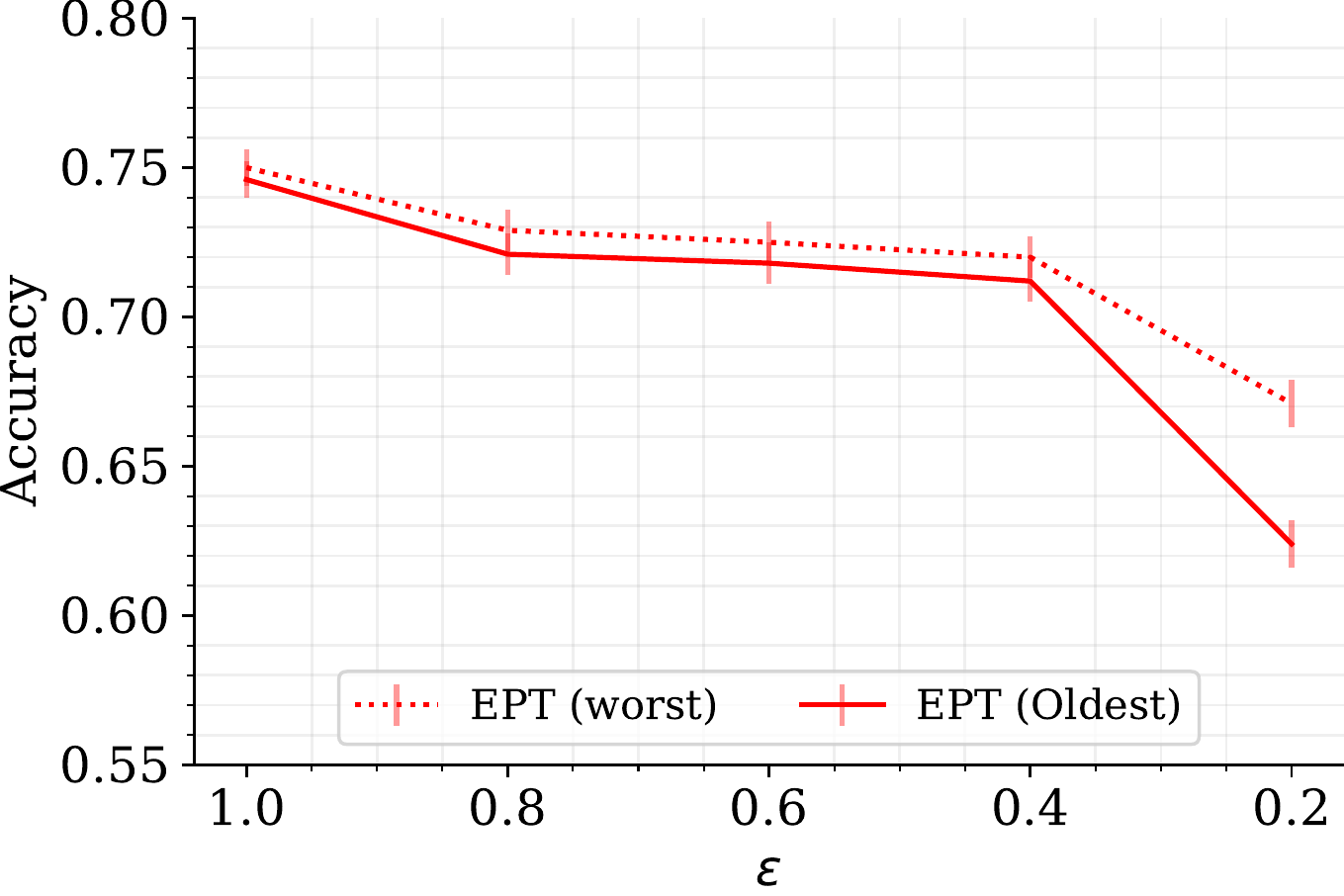}
        \caption{Hyperplane (Recurring)}
    \end{subfigure}%
    \begin{subfigure}{.25\textwidth}
     \includegraphics[scale=0.31]{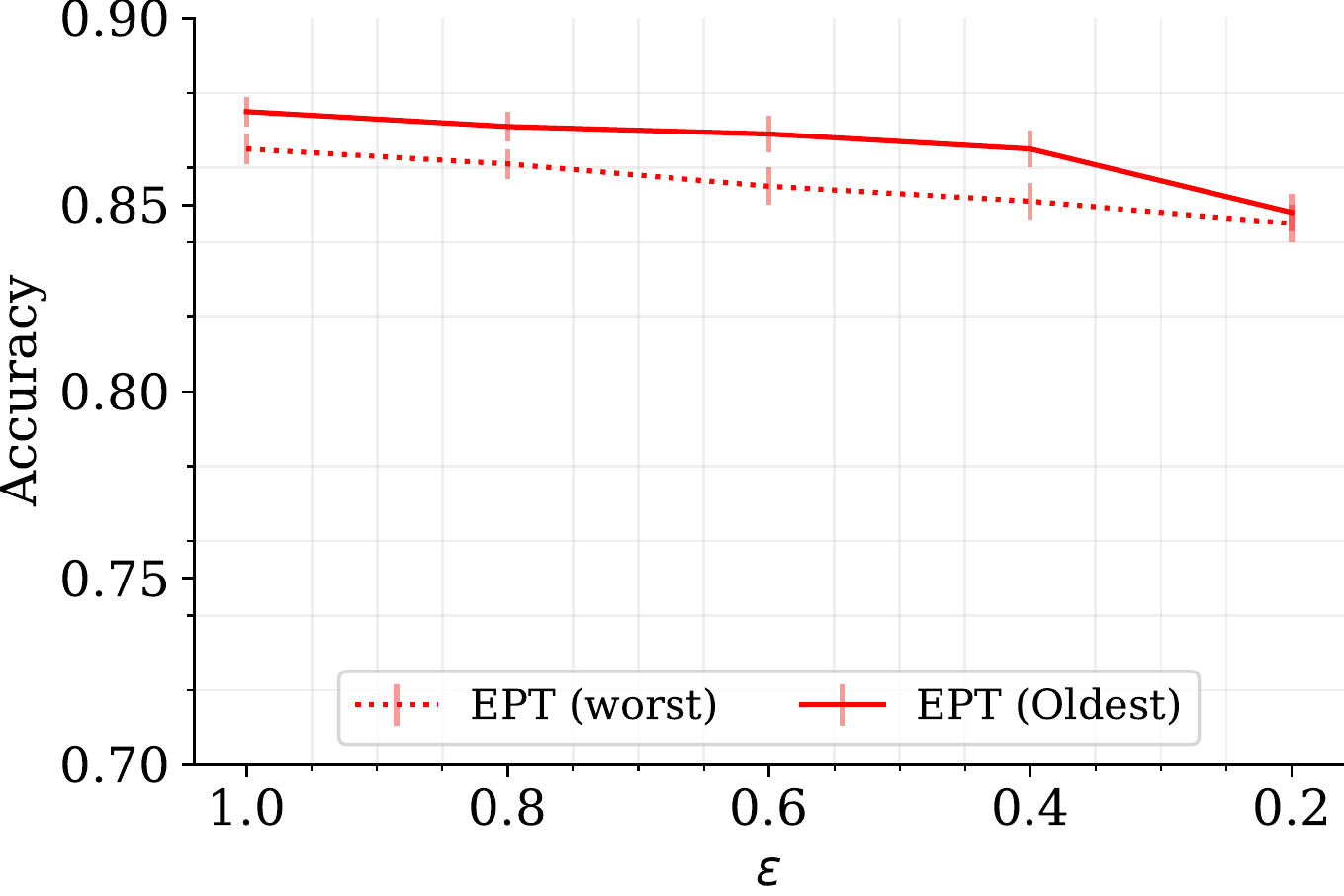}
        \caption{EMBER-B}
    \end{subfigure}%
      \begin{subfigure}{.25\textwidth}
     \includegraphics[scale=0.31]{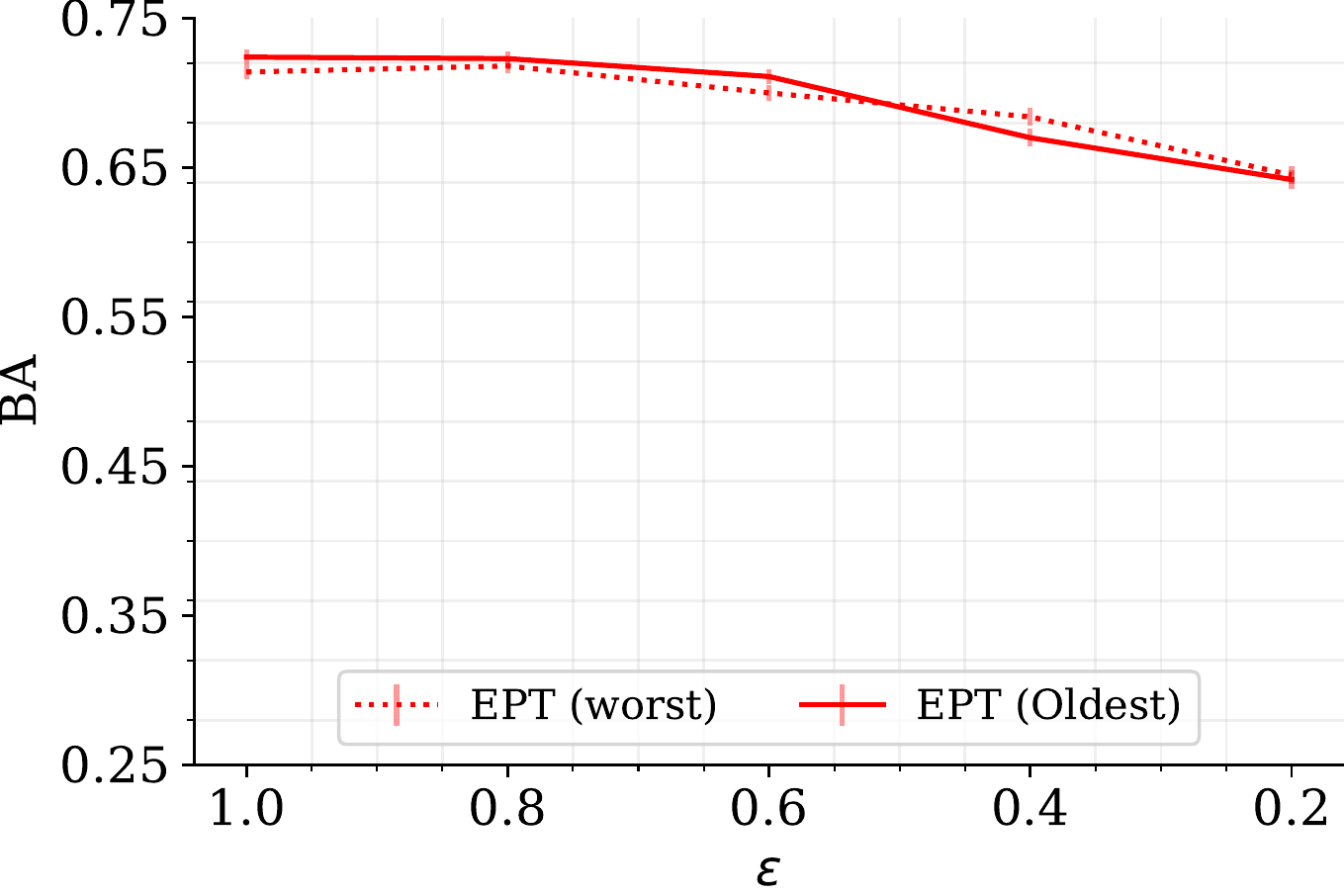}
        \caption{EMBER-U}
    \end{subfigure}%
    \begin{subfigure}{.25\textwidth}
     \includegraphics[scale=0.31]{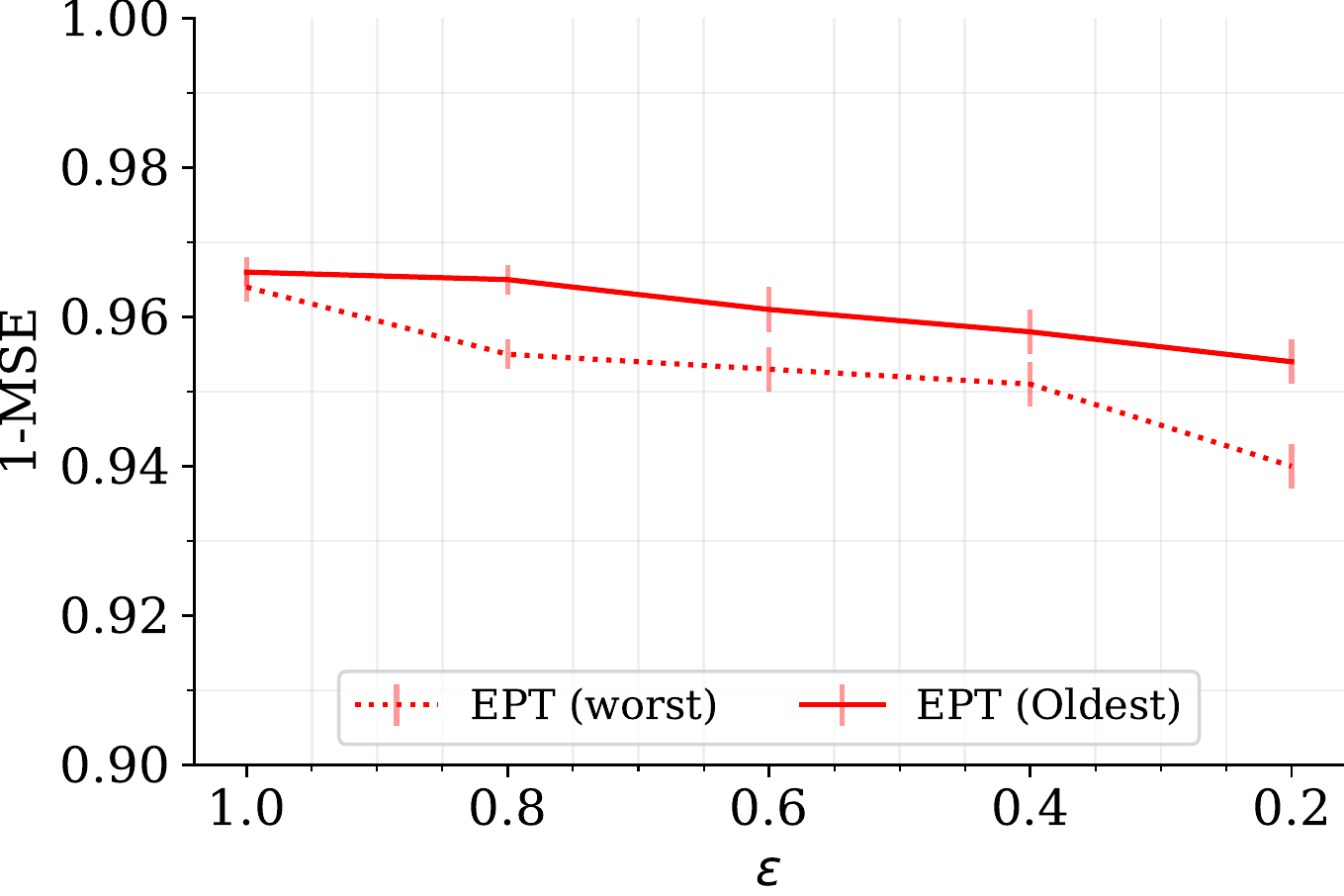}
        \caption{Housing Market}
    \end{subfigure}%
    \vspace{-0.2cm}
    \caption{\textit{\textbf{Impact of model replacement}} (Comparing replacing oldest model vs replacing worst model).}\label{fig:worsevsold}
    \vspace{-0.2cm}
\end{figure*}

From now, we consider only the privacy preserving methods, i.e., EPT, EP, PT($1$), and PT($k$).

\subsection{Impact of Transfer Learning}\label{sec:transfer}
Figure \ref{fig:ab_trans} studies the effect of transfer learning by comparing EPT against the non-transfer learning counterpart EP (Note that EPT and EP are same for Hyperplane that has no transfer learning). 
EPT outperforms EP by a \emph{significant} margin, with the average boost $>10\%$ for EMBER-B and EMBER-U, and 7\% for Housing Market, for all settings of $\varepsilon$. As the privacy budget gets tighter ($\varepsilon$ decreases), EP decays in performance faster than EPT. This study supports our claim 
at the beginning of this section that transfer learning can boost the utility for training private models by reducing the number of trainable parameters for DPSGD.

From now, we consider only DP methods with transfer learning, i.e., EPT, PT($1$), and PT($k$).

\subsection{Impact of Ensemble}\label{sec:ensemble_k}
To investigate how the ensemble approach helps, in Figure \ref{fig:ab_ens} we compare EPT with the non-ensemble counterparts PT($1$) and PT($k$). The first row varies $\varepsilon$ and the second row varies $k$. With the varying $\varepsilon$, EPT outperforms the non-ensemble competitors consistently because our novel DP weight mechanisms diminish the weights for outdated models. PT($k$), in general, performs better than PT($1$), except for EMBER-B, because PT($1$) uses a single data chunk, which leads to a larger $\sigma$ for the Gaussian noise, as discussed at the beginning of this section. 
When $k\geq 5$, there is some performance decline for EPT because more outdated data chunks are used in an ensemble and because the budget $\varepsilon_2=\varepsilon/k$ for weight estimation gets tighter, but this decline is smaller than that for PT($k$) because of the ``auto-correction" due to the weighting scheme in EPT. In our evaluation, we observed $3 \ge k \le 7$ perform the best.

\subsection{Impact of Concept Drifts}\label{sec:ablation_drift}
Figure \ref{fig:drift} shows the impact of four simulated drift types using Hyperplane.
When drift is rapid, as the chunk size $|D_i|$ increases, the performance of EPT and PT($1$) initially increases and then decreases due to increasing drift introduced within a data chunk. This decline trend is especially observed for PT($k$) that uses the union of $k$ chunks to train the model. So the chunk size is a double-edged sword for rapid drift: too small or too large will hurt. There is a similar trend for recurring drift. When drift is gradual, all methods benefit as $|D_i|$ increases because drift is introduced slowly. When drift is abrupt, both EPT and PT($1$) adapt well, but PT($k$) fails to learn in this case as it uses a \emph{stale} model, i.e., an abrupt change occurs after the model training.  


\subsection{Impact of Model Replacement}\label{sec:newvsold}
Figure \ref{fig:worsevsold} shows the impact of replacing old model vs replacing worst model during the ensemble update of EPT. For Hyperplane, where we choose recurring drift,  replacing the worst model is better than replacing the oldest model. For the other datasets, however, replacing the oldest model gives slightly better performance due to adding a smaller noise in the weight estimation, i.e., $\varepsilon_2=\nicefrac{\varepsilon}{k}$ vs $\varepsilon_2=\nicefrac{\varepsilon}{k+1}$. See Theorem \ref{thm:dp-update-old} and Theorem \ref{thm:dp-update-worse}.

\section{Conclusion}\label{sec:conclusion}
We presented a practical DP solution to predictive modeling (both classification and regression) for data streams with concept drift. To the best of our knowledge, this is the first work that allows an unbounded number of updates under a fixed privacy budget. The key component is a novel DP weighting mechanism for integrating the models in an ensemble. Our solution is model agnostic and can be used with any existing DP classification/regression method. 


\section*{Acknowledgements}
This research is in part supported by a CGS-D award for Lovedeep Gondara and a discovery grant for Ke Wang from Natural Sciences and Engineering Research Council of Canada.


\bibliographystyle{ACM-Reference-Format}
\bibliography{sample-base}

\end{document}